\def\eqref#1{equation~\ref{#1}}
\def\1{\bm{1}}
\DeclareMathAlphabet{\mathsfit}{\encodingdefault}{\sfdefault}{m}{sl}
\SetMathAlphabet{\mathsfit}{bold}{\encodingdefault}{\sfdefault}{bx}{n}
\definecolor{GroupA}{RGB}{220,20,60}   
\definecolor{GroupB}{RGB}{30,144,255}  
\definecolor{GroupC}{RGB}{34,139,34}   
\newtheorem{lemma}{Lemma}
\newtheorem{theorem}{Theorem}
\theoremstyle{remark}
\title{SoC-DT: Standard-of-Care Aligned Digital Twins for Patient-Specific Tumor Dynamics}
\author{Moinak Bhattacharya$^1$, Gagandeep Singh$^2$ \& Prateek Prasanna$^1$\\
$^1$Stony Brook University, $^2$ Columbia University\\
\texttt{\{moinak.bhattacharya,prateek.prasanna\}@stonybrook.edu}\\
}
\begin{document}

\maketitle

\begin{abstract}
Accurate prediction of tumor trajectories under standard-of-care (SoC) therapies remains a major unmet need in oncology. This capability is essential for optimizing treatment planning and anticipating disease progression. Conventional reaction–diffusion models are limited in scope, as they fail to capture tumor dynamics under heterogeneous therapeutic paradigms. There is hence a critical need for computational frameworks that can realistically simulate SoC interventions while accounting for inter-patient variability in genomics, demographics, and treatment regimens. We introduce \textbf{Standard-of-Care Digital Twin (SoC-DT)}, a differentiable framework that unifies reaction–diffusion tumor growth models, discrete SoC interventions (surgery, chemotherapy, radiotherapy) along with genomic and demographic personalization to predict post-treatment tumor structure on imaging. An implicit-explicit exponential time-differencing solver, IMEX-SoC, is also proposed, which ensures stability, positivity, and scalability in SoC treatment situations. Evaluated on both synthetic data and real world glioma data, SoC-DT consistently outperforms classical PDE baselines and purely data-driven neural models in predicting tumor dynamics. By bridging mechanistic interpretability with modern differentiable solvers, SoC-DT establishes a principled foundation for patient-specific digital twins in oncology, enabling biologically consistent tumor dynamics estimation. Code will be made available upon acceptance.

\end{abstract}

\section{Introduction}

Cancer is one of the most deadly disease and remains a leading cause of death in the United States alone with an estimated 2,041,910 new cases and 618,120 deaths in 2025~\citep{siegel2025cancer}. Predicting the tumor growth under standard-of-care (SoC) therapies remains a key challenge in mathematical oncology. Most solid tumors including gliomas, breast cancers, etc., despite having well-established protocols combining surgery, chemotherapy/immunotherapy, and radiotherapy~\citep{stupp2005radiotherapy,early2011effect}, exhibit highly heterogeneous responses: some patients relapse within months, while others remain stable for years~\citep{verhaak2010integrated,parker2009supervised}. This variability highlights the need for models that capture patient-specific spatio-temporal dynamics and provide actionable forecasts for treatment planning. 

Existing clinical tools are largely limited to population-level survival statistics or static risk prediction~\citep{harrell2022regression,van2019calibration} that fail to account for the non-linear dynamics of tumor growth and treatment response. Imaging biomarkers such as volumetric measurements or radiomics features provide insightful information~\citep{aerts2014decoding,kickingereder2016radiomic} but cannot predict tumor structures under different clinical interventions.
Similarly, survival models capture covariate–outcome associations but do not model the underlying biology of tumor progression~\citep{katzman2018deepsurv,yanagisawa2023proper,liu2024interpretable}. As a result, clinicians lack principled computational surrogates that can continuously evolve with the patient, adapt to incoming data, and evaluate the consequences of different treatment decisions. With the advent of generative AI, several methods have been proposed to generate missing~\citep{bhattacharya2024radgazegen,bhattacharya2025brainmrdiff} or post-baseline (multiple timepoint) medical images~\citep{bhattacharya2025immunodiff,liu2025treatment} but a major limitation of these approaches is their static nature: they are restricted to generating the next timepoint image from the previous one and lack the ability to incorporate heterogeneous inputs. This creates a critical technical gap in translating current generative AI methods into true digital twin modeling.

The concept of a \textit{Digital Twin} has recently emerged as a 
paradigm for simulation-based modeling~\citep{grieves2023digital,laubenbacher2024digital,venkatesh2022health}. A digital twin is a computational surrogate that evolves in tandem with its real-world counterpart, continuously adjusted via incoming observations. In oncology, this translates into patient-specific simulators that can forecast tumor trajectories under observed or hypothetical interventions, while assimilating multimodal evidence such as serial imaging, genomic profiles, and treatment logs~\citep{bjornsson2019digital,corral2020digital,kuang2024med}. From a machine learning perspective, digital twins align naturally with simulation-based approaches, as they require modeling tumor growth and existing treatment strategies. Their main purpose is to estimate potential clinical outcomes under different treatment choices. Despite these advantages, applying digital twins in oncology remains challenging due to limited longitudinal data, diverse patient populations, and disruptions introduced by interventions such as surgery, fractionated radiotherapy, and evolving treatment guidelines~\citep{bruynseels2018digital,katsoulakis2024digital,yankeelov2015toward,jarrett2018mathematical}.

A natural foundation for oncology digital twins is provided by partial differential equations (PDEs) that model tumor growth from imaging scans. Classical reaction–diffusion models describe the spatio-temporal dynamics of tumor cell density as the interplay between local proliferation and spatial invasion, parameterized by biologically interpretable quantities such as net proliferation rate and diffusivity~\citep{murray2007mathematical,swanson2000quantitative,unkelbach2014radiotherapy,tracqui1995mathematical}. Therapy effects can be incorporated via additional terms, for example chemotherapy-induced cytotoxicity or radiotherapy-induced survival fractions derived from linear and quadratic formulations~\citep{fowler1989linear,powathil2013towards}. From the ML perspective, PDEs serve as powerful \textit{structured priors}: they encode inductive biases about smoothness, conservation, and positivity that constrain the solution space and improve generalization under limited data. Recent advances in differentiable physics and neural PDE solvers~\citep{raissi2019physics,chen2018neural,li2020fourier,brandstetter2022message,kovachki2023neural,lu2021learning,ruthotto2020deep} have demonstrated how mechanistic laws can be embedded directly into learning pipelines, enabling gradient-based calibration, hybrid data--physics models, and operator learning for spatiotemporal systems~\citep{gupta2022towards,kovachki2023neural}. This view reframes PDE-based oncology models not as standalone mathematical constructs but as differentiable operators that can be composed with deep neural architectures, calibrated end-to-end from multimodal data. Yet, practical deployment remains difficult: estimating parameters from sparse, noisy observations is ill-posed, genomic heterogeneity is rarely integrated into dynamical laws and existing methods often fail to handle the discontinuities induced by discrete interventions such as surgery, fractionated radiotherapy~\citep{rockne2015patient} and different combinations and doses of chemotherapy and immunotherapy. Also, lack of multiple timepoint clinical data (such as images, genomic markers, demographics, etc.) to train a digital twin model has significantly restricted research in this domain.

To summarize, there is a lack for a framework that replicates the standard-of-care treatment methods and a tumor growth modeling method under different treatments. In this work, we propose the \textit{first} \textbf{Standard-of-Care Digital Twin (SoC-DT)}, a framework that unifies tumor growth modeling and standard-of-care treatment methods. 
SoC-DT extends hybrid reaction–diffusion PDEs to incorporate the three pillars of SoC therapy, while 
integrating parameters 
like genomic markers (e.g., IDH1, MGMT, 1p/19q, ATRX, HER2, etc.). To render the model trainable and scalable, we introduce two innovations: (i) an IMEX–ETD solver that integrates diffusion implicitly and reaction/therapy dynamics analytically, ensuring stability, positivity, and efficiency on imaging 
; and (ii) an event-aware adjoint method that propagates exact gradients through discontinuities introduced by surgery and radiotherapy, enabling gradient-based calibration from multimodal patient data. Since, there is no existing standardized dataset or framework, we propose a mechanism to generate synthetic datasets based on -standard-of-care treatments for digital twin experimentations. Together, these components yield a differentiable, biologically grounded model capable of personalized forecasts.

Our contributions are threefold: (i) \textbf{Standard-of-Care Digital Twins.} We formulate a unified PDE that couples tumor growth with SoC therapies and modulates key parameters via genomic and demographics markers. (ii) \textbf{Differentiable solvers.} We develop an IMEX–SoC, a standard-of-care based integration scheme and an event-aware adjoint method that ensure stability, positivity, and exact gradient propagation through discontinuities. (iii) \textbf{Comprehensive evaluation.} We demonstrate, on three synthetic datasets and a real longitudinal MRI dataset with genomic annotations, that SoC-DT outperforms both classical PDE baselines and black-box neural models in predicting tumor evolution, treatment response, and survival.
 (iv) \textbf{Plug-n-Play framework.} We propose a framework for plug-and-play standard-of-care digital twin framework.


\section{Methods}
\begin{figure*}
    \centering
    \includegraphics[width=0.7\linewidth]{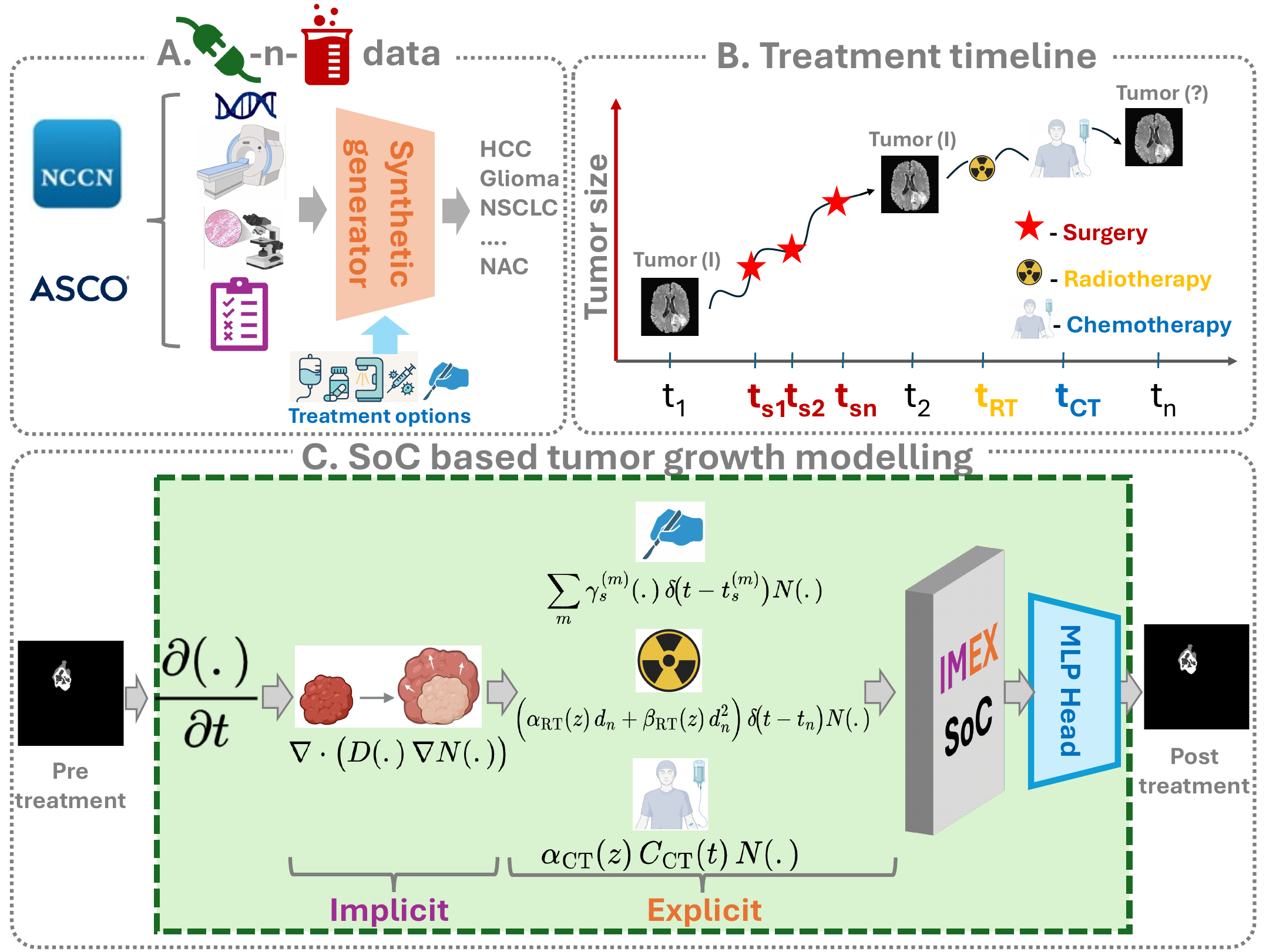}
    \caption{\textbf{Architecture}. A. A plug-and-play framework for generating synthetic datasets for different cancer types, B. An adaptation of a timeline for standard-of-care cancer treatment, C. Proposed \textit{Standard-of-Care} tumor growth modeling framework. Post-treatment tumor structure is predicted from pre-treatment scans using a PDE framework that incorporates basic diffusion and proliferation terms, along with modules simulating surgery, chemotherapy, and radiotherapy. 
    }
    \label{fig:architecture}
\end{figure*}

\subsection{Background}
\textbf{Reaction--diffusion models for tumor growth modeling.}
Tumor cells grow in an unrestricted manner that involves diffusion and cell proliferation~\citep{swanson2011quantifying}. A reaction-diffusion model simulates this temporal evolution of the tumor cell density $N$, shown in the equation:
$
\frac{\partial N}{\partial t} = D \nabla^2 N + k\, N \Big(1 - \tfrac{N}{\theta}\Big),$
where $D>0$ is the diffusion coefficient, $k>0$ is the proliferation rate, and $\theta>0$ is the carrying capacity.
This reaction-diffusion model captures the dual processes of invasion (via diffusion) and logistic growth (via proliferation).\\
\textbf{Semi-discretization.} 
In the previous point, we discussed that reaction-diffusion PDEs models the evolution of tumor cell density, which in our case, is captured from imaging scans. In medical images, we discretize the 2D/3D spatial domain into $M$ pixels/voxels, yielding a vector $\mathbf N(t)\in\mathbb{R}^M$ of pixelwise/voxelwise tumor densities across time $t\in[0,T]$. Semi-discretization is one of the most standard ways to handle PDEs numerically by converting it to a system of ODEs in time.
\begin{equation}
\frac{d\mathbf N}{dt} = D \mathbf L \mathbf N + k\, \mathbf N \odot \Big(1-\tfrac{\mathbf N}{\theta}\Big) - \alpha_{\text{CT}}\, C(t)\,\mathbf N,
\end{equation}
where $\mathbf L$ is the discrete Laplacian and $\odot$ denotes the Hadamard product.
Treatment events such as surgery, radiotherapy (RT) and chemotherapy (CT) are represented as instantaneous multiplicative updates at intervention timepoints: $\mathbf N^+ = J(\mathbf N^-,\vartheta), \qquad \vartheta = \{ D, k, \alpha_{\text{CT}}, \alpha_{\text{RT}}, \beta_{\text{RT}}\}.$
Here, $N^+$ denotes the tumor state immediately before a treatment event, $N^-$ denotes the state immediately after the event and $J$ is the jump operator that maps the pre-event state to the post-event state. And, $\alpha_{\text{RT}}$ and $\beta_{\text{RT}}$ are radiosensitivity coefficients, and $\alpha_{\text{CT}}$ is the chemo-sensitivity (cytotoxic) coefficient.\\
\textbf{Numerical solvers.}
Explicit finite-difference schemes for diffusion are limited by the stability condition 
$\Delta t \leq \Delta x^2/(2D)$. In medical images, $\Delta x$ is small and $D$ can be large which makes $\Delta t$ to be extremely small (i.e. very slow simulations). To address this prior works have used implicit--explicit exponential time-differencing (IMEX-ETD) schemes~\citep{ascher1995implicit} in which diffusion is treated implicitly using a conjugate gradient (CG) solve of $(I - \Delta t\, D \mathbf L)\tilde{\mathbf N} = \mathbf N^n$. In addition to this, proliferation 
and chemotherapy are represented through a Riccati solution,
$ N^{n+1}_i = \frac{a\, \tilde N_i\, e^{a \Delta t}}{b\, \tilde N_i (e^{a \Delta t}-1)+a}$, where $a = k - \alpha_{\text{CT}}C(t),\; b = k/\theta$ with bounds $0 \leq N \leq \theta$.
\subsection{Framework} 
Designing a digital twin framework is essential for systematically modeling patient-specific disease trajectories and treatment responses. Our proposed framework is organized into two stages: \textit{(a) Plug-and-Play dataset generation}, which ensures flexible and standardized data preparation, and \textit{(b) Treatment timeline}, which captures the sequential flow of clinical interventions and imaging follow-up.\\
\textbf{Plug-and-Play dataset generation.} We propose a novel method for generating synthetic treatment scenarios to facilitate digital twin modeling. We synthesize the following for multiple cancer types: phantom images (either MRI scans or CT images) at different timepoints (i.e. pre-treatment, post-surgery and post-treatment), the genomic and molecular markers (such as IDH1 for AG, HER2 for breast, etc.), demographics (such as Age, Gender etc.), time-to-treatment and standard-of-care treatment (such as radiotherapy, chemotherapy, surgery, etc.). The standard-of-care for different tumor types is obtained from different public guidelines namely NCCN and ASCO. These guidelines provide detailed standards for different treatments under different genomic markers, histology, grade, etc. For example, for a post-menopausal patient with Ductal
or Lobular histology and with HR status ER positive and or PR positive and HER2 negative, if pN0, the suggested therapy is adjuvant endocrine therapy and if pN2/pN3, the suggested therapy is adjuvant chemotherapy followed by endocrine therapy (More details in Appendix~\ref{sec:appendix_datasets}).\\
\textbf{Treatment timeline.} Following this, we adapt a  generalized timeline from clinically-defined standard-of-care treatment methods~\citep{baden2024prevention,gradishar2024breast}. 
Most solid tumor treatments follow a structured sequence that begins with pre-treatment imaging to establish the diagnosis and staging, often accompanied by a biopsy for histopathology and molecular profiling. This is followed by a primary intervention, typically surgery if feasible, or neoadjuvant therapy when tumors are inoperable. Post-operative imaging is then obtained to evaluate the extent of resection and provide a new baseline for subsequent care. Adjuvant therapies, including radiation, chemotherapy, targeted agents, or immunotherapy, are administered depending on tumor type and biology. Finally, post-treatment imaging is performed to assess therapeutic response and detect residual disease, after which patients transition to regular surveillance. This general timeline may be adapted with neoadjuvant therapy before surgery, non-surgical definitive treatment, enrollment in clinical trials, or palliative approaches in advanced disease.
\subsection{Standard-of-Care Digital Twins}
Following this framework, we develop an end-to-end digital twin for tumor growth modeling. A vanilla reaction–diffusion model, however, cannot incorporate the complexities of standard-of-care treatments and is therefore limited in its ability to accurately capture patient-specific growth dynamics. To address this, we formulate a \textit{Standard-of-Care Digital Twin (SoC-DT)} as a patient-specific biophysical rollout on the imaging (MRI in our case) domain $\Omega \subset \mathbb{R}^{D}$:
\begin{equation}
\begin{aligned}
\frac{\partial N(x,t)}{\partial t}
&= \underbrace{\nabla \cdot \big(D(z)\,\nabla N(x,t)\big)}_{\text{Diffusion/invasion}} 
+ \underbrace{k(z)\,N(x,t)\!\left(1-\tfrac{N(x,t)}{\theta}\right)}_{\text{Logistic proliferation}} - \underbrace{\alpha_{\text{CT}}(z)\,C_{\text{CT}}(t)\,N(x,t)}_{\text{Chemotherapy kill}} \\
&\quad - \underbrace{\sum_m \gamma_s^{(m)}(x)\,\delta\!\big(t-t_s^{(m)}\big)N(x,t)}_{\text{Surgery resection}} - \underbrace{\sum_n \Big(\alpha_{\text{RT}}(z)\,d_n + \beta_{\text{RT}}(z)\,d_n^2\Big)\,
\delta\!\big(t-t_n\big)N(x,t)}_{\text{Radiotherapy 
}}.
\end{aligned}
\end{equation}
Here,  $z$ denotes patient-specific covariates (e.g., demographics, grade, molecular markers) that modulate biophysical parameters;  
$\gamma_s^{(m)}(x)$ is the spatial resection mask for the $m$-th surgery, indicating the fraction of tumor removed at pixel/voxel $x$;  
$m$ indexes surgery events with times $t_s^{(m)}$;  
$\delta(\cdot)$ is the Dirac delta distribution enforcing instantaneous treatment events;  
$\alpha_{\text{RT}}(z)$ and $\beta_{\text{RT}}(z)$ are 
radiotherapy parameters conditioned on patient-specific covariates $z$ (linear and quadratic terms of the linear–quadratic model, respectively);  
$d_n$ is the radiation dose per fraction delivered at time $t_n$;  
$t_s^{(m)}$ and $t_n$ denote the calendar days to surgery and radiotherapy fractions, respectively.


\subsection{Spatial Discretization}

We numerically solve the SoC-DT directly on the native image lattice of the medical images, treating each pixel/voxel as a computational grid point. The Laplacian operator $\mathbf L$ acts on the tumor field $\mathbf N$ to approximate its second spatial derivatives, with reflective (Neumann) boundary conditions imposed to prevent artificial flux of tumor tissues across the brain boundary. Note that, $\mathbf{N}$ is the vector of tumor tissue densities obtained from $N$ which is a function of space ($x$) and time ($t$). This semi-discretized form is expressed as a system of ordinary differential equations in time:
\begin{equation}
\begin{aligned}
\frac{d\mathbf N}{dt}
= D\,\mathbf L\,\mathbf N
+ k\,\mathbf N \odot \Big(1-\tfrac{\mathbf N}{\theta}\Big)
- \alpha_{\text{CT}}\,\mathbf C(t)\odot \mathbf N
- \sum_m \mathbf R^{(m)} \odot \mathbf N \,\delta\!\big(t - t_s^{(m)}\big) \\
- \sum_n \big(\alpha_{\text{RT}} d_n + \beta_{\text{RT}} d_n^2\big)\,\mathbf N\,\delta\!\big(t - t_n\big),
\end{aligned}
\end{equation} where $\mathbf R^{(m)} \in [0,1]^{|\Omega|}$ is the voxel-wise resection fraction for the $m$-th surgery
and jump conditions for RT at interval ends. This ensures numerical stability and efficiency by avoiding mesh generation 
or resampling.


\subsection{IMEX--SoC Solver}
A key challenge of PDE-based digital twin is that the tumor growth dynamics is stiff~\cite{cristini2010multiscale}. Diffusion is numerically unstable under explicit schemes and on the other hand non-linear proliferation and treatment events
are difficult to integrate implicitly at scale. To address this, we propose a novel implicit–explicit \textit{Standard-of-Care} 
(IMEX--SoC) solver, which is a hybrid exponential time-differencing (ETD) scheme that ensures stability, efficiency, and proper handling of discontinuous treatment events.
To balance stiffness and efficiency we propose these three steps: \textbf{Implicit diffusion:} The update term $(\mathbb{I} - \Delta t\, D\, \mathbf L)\,\tilde{\mathbf N} = \mathbf N^n$ propagates the diffusion effect using an implicit finite-difference step. Unlike explicit schemes, where $N^{n+1}$ is computed directly from $N^n$, which makes stiff problems like diffusion unstable, unless the time step $\Delta t$ is extremely small. \textbf{Closed-form reaction/chemo:} This step solves the local ODE, which combines logistic proliferation with chemotherapy kill. Its closed-form Riccati solution is $
    N^{n+1} = \frac{a\,\tilde N\,e^{a\Delta t}}{b\,\tilde N (e^{a\Delta t}-1)+a},\;
    a=k-\alpha_{\text{CT}}C,\; b=k/\theta,
    $ clamped to $[0,\theta]$ to preserve biologically valid tumor densities. \textbf{Treatment events:} In this step, we discuss how tumor density is updated by instantaneous jump conditions. Surgery resection is modeled as $
N^+ = (1-R)\odot N^-$, 
where $R(x)$ is the pixel/voxel-wise resection mask. And for radiotherapy, tissue survival rate follows a linear–quadratic model represented as
$N^+ = S(d)\odot N^-, \quad S(d) = \exp(-\alpha_{\text{RT}} d - \beta_{\text{RT}} d^2)$, with $d$ the dose per fraction parameterized through radiosensitivity parameters $\alpha_{\text{RT}}$ and $\beta_{\text{RT}}$.
\emph{Time handling:} To reduce artificial uniform time spacing and preserve real calendar-day gaps between scans, we simulate each interval of $\Delta t$ days with $n=\lceil \Delta t \times \text{steps\_per\_day}\rceil$ sub-steps, ensuring alignment between solver trajectories and actual clinical follow-up times.




\begin{figure}[h]
    \centering
    \begin{subfigure}[t]{0.55\linewidth}
        \centering
        \includegraphics[width=0.95\linewidth]{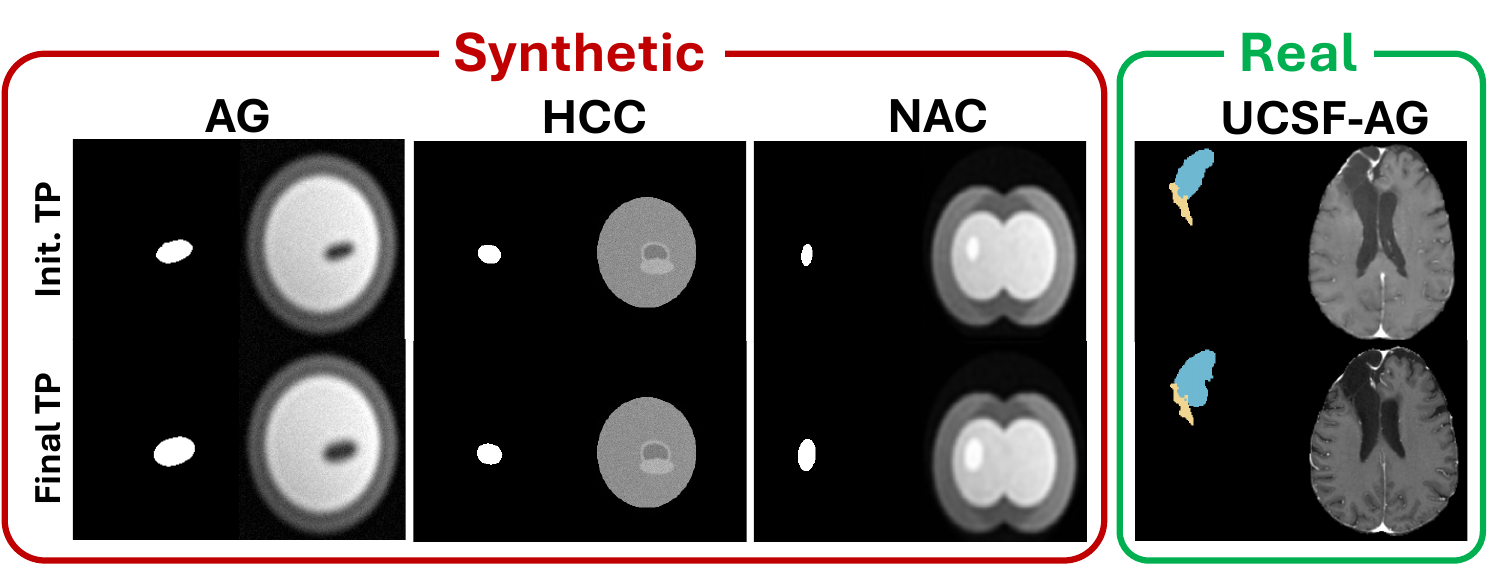}
        \vspace{0pt} 
        \caption{\textbf{Datasets.}}
        \label{fig:datasets_sub}
    \end{subfigure}%
    \hfill
    \begin{subfigure}[t]{0.45\linewidth}
        \centering
        \vspace{-70pt} 
        \scalebox{0.95}{
        \begin{tabular}{lcc}
        \hline
        \textbf{Method} & \textbf{MAE} ($\downarrow$) & \textbf{RMSE} ($\downarrow$) \\
        \hline\hline
        Linear  & 210.4 $\pm$ 108.5 & 293.3 $\pm$ 147.2 \\
        Fisher  & 214.2 $\pm$  99.5 & 307.5 $\pm$ 130.5 \\
        Hybrid  & 218.7 $\pm$ 103.2 & 309.4 $\pm$ 142.8 \\
        PINN    & 214.2 $\pm$  99.5 & 307.5 $\pm$ 130.5 \\
        \hline
        Ours    & \textbf{200.4 $\pm$ 112.5} & \textbf{279.5 $\pm$ 154.6} \\
        \hline
        \end{tabular}
        }
        \caption{\textbf{Progression analysis.}}
        \label{tab:progression_sub}
    \end{subfigure}

    \caption{(a) For our experiments, we use 3 synthetic datasets and 1 real clinical dataset. Initial and final timepoint images along with the different treatment methods are shown. AG: Adult Glioma, HCC: Hepatocellular Carcinoma, and NAC: Neoadjuvant Chemotherapy for Breast cancer. We perform quantitave analysis on both synthetic and real datasets. Synthetic datasets are primarily used for stress testing and real datasets are used for clinical downstream tasks 
    (b) Comparison of our method against baseline models for regression of time-to-progression (days), evaluated using MAE and RMSE ($\mu\pm\sigma$ is reported).}
    \label{fig:datasets_progression}
\end{figure}

\subsection{Training}
To stabilize trajectories, we incorporate an optional assimilation step at intermediate observed timepoints, where predictions $u$ are blended with ground-truth masks $u_{\text{obs}}$ as
$
u \;\leftarrow\; \alpha u + (1-\alpha) u_{\text{obs}}, \quad \alpha \in [0,1]$. For training, we use a Dice loss function which minimizes the DSC score between predicted and ground-truth post-treatment tumor masks. We use the Adam optimizer with gradient clipping to ensure stable convergence. Therotical gurarantees in Appendix~\ref{sec:appendix_theoretical_guarantees}.

\section{Experiments and Results}
\subsection{Experimental Setup}
\textbf{Datasets.} For quantitative comparisons and sensitivity analysis, we generated three longitudinal synthetic datasets namely AG (Brain Cancer), HCC (Liver Cancer) and NAC (Breast Cancer). For clinical applications, we use a real adult longitudinal post-treatment diffuse glioma dataset, UCSF-ALPTGD~\cite{fields2024university} (Figure~\ref{fig:datasets_progression}.a),
Details in Appendix~\ref{sec:appendix_datasets}.\\
\textbf{Baselines.} To comprehensively evaluate our proposed framework, we benchmarked against a diverse set of baselines spanning deep learning models, classical PDE-based methods, and hybrid physics–informed approaches. Specifically, we considered six representative methods grouped into three categories:
Data-driven deep learning models: \textbf{\textcolor{GroupA}{UNet}}~\citep{ronneberger2015u} and \textbf{\textcolor{GroupA}{ConvLSTM}}~\citep{shi2015convolutional}, which learn spatiotemporal tumor evolution directly from data without explicit physics constraints.
Classical PDE-based models: \textbf{\textcolor{GroupB}{Linear PDE}} and \textbf{\textcolor{GroupB}{Fisher KPP}}~\citep{fisher1937wave,kolmogorov1937etude}, which represent traditional reaction–diffusion formulations commonly used to model tumor growth dynamics.
Physics-informed NNs: \textbf{\textcolor{GroupC}{PINN}}~\citep{raissi2019physics} and \textbf{\textcolor{GroupC}{Hybrid PDE}}~\citep{sun2020surrogate}, which integrate PDE priors with neural correctors (Hybrid).\\
\textbf{Evaluation metrics.} To evaluate the tumor growth modeling, we compute the Dice--Sørensen Coefficient (DSC) scores of the predicted masks with ground truth masks. For progression analysis experiments, we compute Mean Absolute Error (MAE) and Root Mean Squared Error (RMSE).
All experiments are performed in a 5-fold cross validation setting; we report the
mean ($\mu$) and standard deviation ($\sigma$) on 5 folds.
\\
\begin{table*}[t]
\centering
\caption{Comparison of DSC results across datasets and model variants. DSC (↑) higher is better.}
\begin{minipage}{0.602\textwidth}
\centering
\subcaption{\textbf{Quantitative comparisons}}
\resizebox{1.15\textwidth}{!}{
\begin{tabular}{lcccc}
\toprule
& \multicolumn{3}{c}{\textbf{Synthetic}} & \multicolumn{1}{c}{\textbf{Real}} \\
\cmidrule(lr){2-4} \cmidrule(lr){5-5}
 & \textbf{AG} & \textbf{HCC} & \textbf{NAC} & \textbf{UCSF} \\
\hline
\hline
\textbf{\textcolor{GroupA}{UNet}}       & 34.05 $\pm$ 45.49 & 44.50 $\pm$ 9.08   & 4.50 $\pm$ 4.47   & 53.09 $\pm$ 2.96 
\\
\textbf{\textcolor{GroupA}{ConvLSTM}}   &         66.10 $\pm$ 37.08        & 41.99 $\pm$ 23.66  & 7.50 $\pm$ 2.50   & 37.69 $\pm$ 31.19  \\
\textbf{\textcolor{GroupB}{Linear PDE}} & 75.52 $\pm$ 1.79  & 48.20 $\pm$ 8.20   & 81.28 $\pm$ 2.03  & 52.78 $\pm$ 3.49 \\
\textbf{\textcolor{GroupB}{Fisher PDE}} & 75.64 $\pm$ 1.79  & 48.20 $\pm$ 8.20   & 76.08 $\pm$ 2.32  & 52.75 $\pm$ 3.47 \\
\textbf{\textcolor{GroupC}{PINN}}       & 75.57 $\pm$ 1.73  & 48.30 $\pm$ 8.18   & 78.62 $\pm$ 3.02  & 51.71 $\pm$ 4.61 \\
\textbf{\textcolor{GroupC}{Hybrid PDE}}    & 77.01 $\pm$ 1.99  & 48.76 $\pm$ 8.27   & 84.41 $\pm$ 2.39  & 54.29 $\pm$ 3.14 \\
\hline
\textbf{Ours}       & \textbf{85.26 $\pm$ 1.69}*  & \textbf{50.46 $\pm$ 8.45}   & \textbf{86.20 $\pm$ 3.08*}  &  \textbf{55.45 $\pm$ 8.12} \\
\bottomrule
\end{tabular}
}
\label{tab:comparison}
\end{minipage}
\hfill
\begin{minipage}{0.3\textwidth}
\centering
\subcaption{\textbf{Ablation analysis}
}
\scalebox{0.78}{
\begin{tabular}{lc}
\toprule
\textbf{Heads / Terms} & \textbf{DSC}($\mu \pm \sigma$) \\
\midrule
Patient-Emb. & $20.31 \pm 1.434$ \\
MLP           & $82.39 \pm 2.179$ \\
Linear        & $82.39 \pm 2.179$ \\
Global        & $82.39 \pm 2.179$ \\
\midrule
No-growth     & $75.50 \pm 1.759$ \\
No-diff       & $81.94 \pm 2.376$ \\
No-kill       & $82.66 \pm 2.269$ \\
No-surg       & $82.66 \pm 2.269$ \\
\bottomrule
\end{tabular}}
\label{tab:ablations}
\end{minipage}
\end{table*}


\begin{figure}
    \centering
    \includegraphics[width=0.9\linewidth]{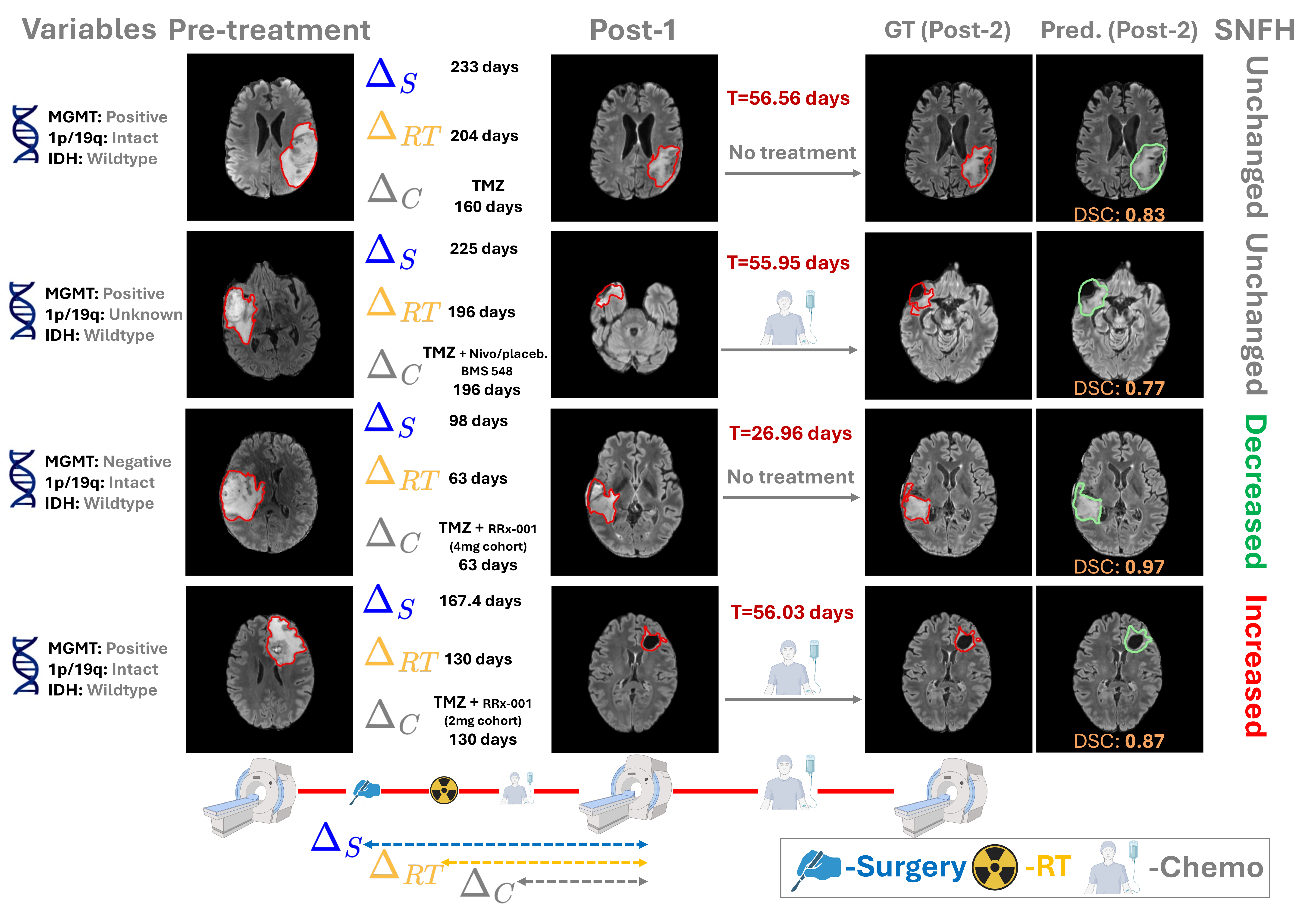}
    \caption{\textbf{Standard-of-Care Digital Twins.} Examples of SoC-DT projections comparing predicted post-treatment tumor masks with the ground-truth follow-up tumor masks, given pre-treatment and post-operative tumor masks overlaid on the corresponding FLAIR sequences. Different treatment options (surgery, radiotherapy, chemotherapy) are illustrated, with the elapsed time from post-operative to post-treatment scan indicated. Quantitative changes in 
    SNFH volume are also reported. 
    }
    \label{fig:qualitative1}
\end{figure}
\subsection{Clinical applications} For clinical applications, we evaluate for post-treatment tumor mask prediction and time-to-progression prediction.\\
\textbf{Setup.} Global fold-level scalars $(D,k,\alpha_{\mathrm{CT}},\alpha_{\mathrm{RT}},\beta_{\mathrm{RT}})$ and modulates $(D,k,\alpha_{\mathrm{CT}})$ per patient are learned via a genomic MLP 
with MGMT, IDH, 1p/19q, grade, and age as inputs. Standard-of-care events are explicitly modeled through exponentially decaying \emph{chemo} pulses derived from first-chemo timing/type and a \emph{radiotherapy} schedule of $30 \times 2$ Gy fractions if RT begins after baseline. Training follows patient-wise K-fold cross-validation with Adam, gradient clipping, and a mixed Dice+BCE loss; SOC-DT conditions on baseline and first post-treatment masks and rolls out to predict the second follow-up, with assimilation at pre-final timepoints.\\ 
\textbf{Post-treatment tumor estimation.} On the UCSF dataset, classical PDE models (Linear: 52.78 $\pm$ 3.49; Fisher: 52.75 $\pm$ 3.47) and physics-informed approaches (PINN: 51.71 $\pm$ 4.61; Hybrid PDE: 54.29 $\pm$ 3.14) achieved similar performance. Our method reached the highest DSC of 55.45 $\pm$ 8.12, representing a 1.16-point absolute gain over Hybrid PDE. This improvement demonstrates that incorporating standard-of-care (SoC) components—surgery, radiotherapy, and chemotherapy—into the governing equations yields measurable performance gains. By extending the Hybrid PDE framework with these clinically grounded factors,
our approach better captures treatment-modulated tumor evolution and achieves superior generalization on real-world data.\\
\textbf{Time-to-progressions results.} Table~\ref{fig:datasets_progression}.b summarizes the time-to-progression (TTP) prediction performance across different baselines. Among the classical physics-based models, the linear, Fisher, and hybrid formulations achieved mean absolute errors (MAE) of $210.4 \pm 108.5$, $214.2 \pm 99.5$, and $218.7 \pm 103.2$ days, respectively.
The PINN baseline performed comparably to the Fisher model, reflecting limited benefit from the additional physics-informed constraint in this setting. Our proposed approach achieved the lowest MAE ($200.4 \pm 112.5$ days) and RMSE ($279.5 \pm 154.6$ days), demonstrating improved predictive accuracy over all baselines.\\
\textbf{Qualitative analysis.} We qualitatively assess the tumor response predicted by SoC-DT in comparison to PINN and Hybrid PDE. Figure~\ref{fig:qualitative2}.A illustrates representative post-treatment FLAIR sequences with the ground truth (GT) and predicted tumor masks overlaid. SoC-DT demonstrates improved DSC score and more accurately captures the residual enhancing margins and infiltrative components, while the baselines frequently overestimate tumor boundaries. Importantly, SoC-DT avoids overestimation of peritumoral edema, thereby maintaining closer agreement with radiologically defined boundaries.
In Figure~\ref{fig:qualitative1}, we show examples of predicted tumor masks under varying treatment scenarios, alongside GT pre-treatment, post-treatment timepoint 1, and post-treatment timepoint 2 FLAIR scans with their corresponding annotations. SoC-DT produces masks that closely align with radiological findings, reflecting both the expected reduction in surrounding nonenhancing fluid-attenuated inversion recovery hyperintensity (SNFH) volume after surgery and the subsequent treatment-modulated changes. The predictions demonstrate preservation of anatomical context, with post-treatment contours that mirror GT evolution in size and morphology. These observations show the clinical utility of SoC-DT in simulating realistic treatment trajectories, capturing both the efficacy and limitations of therapeutic interventions in a manner concordant with expert interpretation.\\
\textbf{Sensitivity analysis.} Here, we show that SoC-DT is robust for different clinical parameters such as genomic markers, SNFH tumor volume change, and treatment types. In Figure~\ref{fig:qualitative2}.B, we show violin plots of DSC scores for different clinical parameters. We observe that fo all the different parameters there is no statistically significant difference in the DSC scores, demonstrating that SoC-DT is robust.

\subsection{Comparison on Toy datasets and sensitivity analysis}
\textbf{Quantitative results.} We first evaluated SoC-DT on the three synthetic cohorts (AG, HCC, NAC) to benchmark performance under controlled tumor evolution and treatment settings. As shown in Table~\ref{tab:comparison}, classical PDE models (Linear, Fisher) achieved DSC scores in the range of 75–81, capturing broad growth dynamics but without treatment-specific fidelity. Physics-informed approaches (PINN, Hybrid PDE) offered modest improvements, with Hybrid PDE reaching 77.01 on AG and 84.41 on NAC. By contrast, SoC-DT consistently achieved the highest performance across all synthetic datasets, with DSC improvements of +8.25 over Hybrid PDE on AG and +1.79 on NAC. Notably, the HCC cohort, which exhibits more heterogeneous kinetics, proved more challenging, with all methods performing in the mid-40s to low-50s; nonetheless, SoC-DT achieved the best score of 50.46, indicating improved generalization across diverse tumor growth regimes. These results highlight the robustness of incorporating SoC interventions directly into the model, yielding measurable gains across multiple synthetic scenarios.\\
\textbf{Sensitivity analysis.} We evaluated SoC-DT’s robustness to key hyperparameters, including prediction horizon($\Delta t$), assimilation strength ($\alpha$), sub-steps per day ($N$), and carrying capacity ($K$). Detailed analysis is in Appendix~\ref{appendix_results}. To summarize the findings, SoC-DT showed stable performance across different parameters demonstrating robustness and clinical reliability.

\begin{figure}
    \centering
    \includegraphics[width=0.8\linewidth]{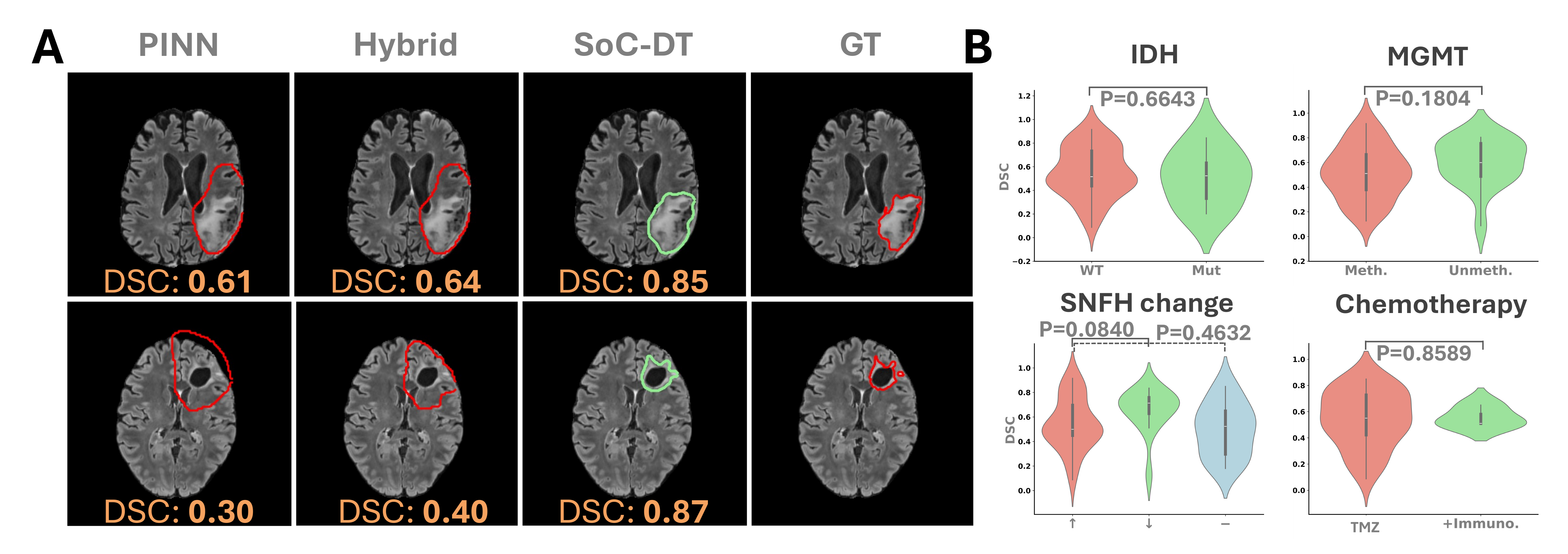}
    \caption{A. \textbf{Qualitative comparisons.} We compare the SoC-DT generated post-treatment tumor masks with baselines like PINN and Hybrid PDE. We also report the DSC scores, B. \textbf{Sensitivity analysis.} Box plots are shown for DSC scores across different genomic marker types, SNFH change and chemotherapy/+immunotherapy. 
    }
    \label{fig:qualitative2}
\end{figure}

\subsection{Ablation analysis} 
To evaluate the relative importance of architectural choices and mechanistic components, we conducted two sets of ablation experiments: one focused on prediction heads and physics modules, and another on the formulation of kill and surgical interventions (Table~\ref{tab:ablations} and Appendix Table~\ref{tab:appendix_killsurgery}). \textbf{Model ablations:} Across different encoder heads, the MLP, linear, and global pooling designs achieved nearly identical performance (DSC $\approx$ 0.824), suggesting that segmentation accuracy was not strongly dependent on the choice of simple aggregation mechanism. In contrast, replacing the head with a patient-embedding--based module caused a marked degradation (DSC = 0.203 $\pm$ 0.012), indicating that identity-level embeddings alone were insufficient to capture tumor dynamics. We next ablated the mechanistic terms underlying tumor evolution. Removing the diffusion term led to only a modest decrease (DSC = 0.819 $\pm$ 0.022), consistent with a cohort where volumetric growth dominated over spatial spread. By comparison, excluding the growth component substantially reduced performance (DSC = 0.755 $\pm$ 0.015), underscoring its central role in shaping disease trajectories. In contrast, omitting either the kill or surgery terms yielded no measurable degradation (DSC $\approx$ 0.826), implying that these effects were either sparsely represented in the dataset or partially redundant with other modeled processes. Collectively, these ablations reveal that accurate forecasting relies most critically on explicit modeling of growth dynamics.
\subsection{Discussion} Our experiments demonstrate that SoC-DT effectively integrates standard-of-care interventions into a mechanistic tumor evolution framework, achieving superior predictive accuracy over both classical PDE models and existing physics-informed neural networks. The quantitative and qualitative results indicate that explicitly modeling surgery, radiotherapy, and chemotherapy improves mask fidelity. 
Sensitivity analyses further highlight the robustness of the approach to variations in prediction horizon, assimilation strength, temporal discretization, and carrying capacity, suggesting that the framework can generalize across diverse clinical scenarios without extensive hyperparameter tuning. Ablation studies emphasize the central role of growth dynamics, while architectural choices and auxiliary terms contribute more modestly, emphasizing that clinically grounded mechanistic modeling drives performance gains. Collectively, these findings suggest that SoC-DT provides a reliable and interpretable tool for simulating patient-specific tumor trajectories, bridging the gap between predictive modeling and clinically actionable insights.
\section{Related Work}
\textbf{Mathematical oncology and reaction--diffusion models.} 
The use of partial differential equations (PDEs) to describe tumor growth has a long tradition in mathematical oncology. Early reaction--diffusion models captured the balance between local proliferation and invasion into surrounding tissue \citep{konukoglu2009image,mang2012biophysical,murray1989mathematical,swanson2000quantitative,swanson2002virtual}, with parameters such as the diffusion coefficient and proliferation rate estimated from imaging. Subsequent extensions incorporated treatment effects, including cytotoxic chemotherapy \citep{konukoglu2009image,stamatakos2010advanced,swanson2000quantitative} and radiotherapy via the linear--quadratic survival model \citep{rockne2010predicting,fowler1989linear}. While these models provide interpretable parameters and mechanistic insights, they are rarely used clinically because they are difficult to calibrate robustly to patient-specific data, often assume homogeneous populations, and typically neglect the impact of discrete events such as surgery. Moreover, the computational burden of solving PDEs on high-resolution 3D MRI grids has limited their adoption in practice.\\
\textbf{Digital twins in oncology.} 
The concept of a digital twin in cancer has recently attracted attention as a means of building computational surrogates of patient trajectories. Prior works have used statistical models or machine learning to forecast tumor growth under observed therapy \citep{gerlee2013model,chaudhuri2023predictive}. However, most existing digital twin approaches either rely on population-level statistics or purely data-driven predictors, and few integrate mechanistic PDE models with patient-specific molecular markers and treatment schedules. Our work differs by proposing a unified framework that combines reaction--diffusion dynamics, surgery/chemo/radiotherapy events, genomic personalization, and differentiable solvers. By embedding an IMEX--ETD integrator and an event-aware adjoint into the digital twin, we provide both theoretical guarantees and practical scalability, bridging the gap between mathematical oncology and machine learning \citep{ruthotto2020deep,hao2022physics,du2020model,sanchez2020learning,krishnan2017structured,willard2022integrating}.  


\section{Conclusion}
In this work, we introduced the Standard-of-Care Digital Twin (SoC-DT), a differentiable framework that unifies mechanistic tumor growth modeling with modern deep learning. By extending reaction–diffusion PDEs to incorporate surgery, chemotherapy, radiotherapy, and genomic modulation, and by developing stable IMEX–ETD solvers with event-aware adjoints, SoC-DT enables patient-specific calibration from multimodal data and biologically consistent forecasts of tumor evolution. Our experiments across multi-institutional cohorts demonstrate that SoC-DT outperforms both classical PDE baselines and black-box neural models, while retaining interpretability and supporting counterfactual simulation of alternative therapies. These results establish SoC-DT as a principled foundation for oncology digital twins, bridging mathematical oncology and machine learning to enable adaptive, patient-specific treatment planning.

\noindent\textbf{Acknowledgements.} This research was partially supported by National Institutes of Health (NIH) and National Cancer Institute (NCI) grants 1R21CA258493-01A1, 1R01CA297843-01, 3R21CA258493-02S1, 1R03DE033489-01A1, and National Science Foundation (NSF) grant 2442053. The content is solely the responsibility of the authors and does not necessarily represent the official views of the National Institutes of Health.

\bibliography{iclr2026/main}

\begin{thebibliography}{69}
\providecommand{\natexlab}[1]{#1}
\providecommand{\url}[1]{\texttt{#1}}
\expandafter\ifx\csname urlstyle\endcsname\relax
  \providecommand{\doi}[1]{doi: #1}\else
  \providecommand{\doi}{doi: \begingroup \urlstyle{rm}\Url}\fi

\bibitem[Aerts et~al.(2014)Aerts, Velazquez, Leijenaar, Parmar, Grossmann, Carvalho, Bussink, Monshouwer, Haibe-Kains, Rietveld, et~al.]{aerts2014decoding}
Hugo~JWL Aerts, Emmanuel~Rios Velazquez, Ralph~TH Leijenaar, Chintan Parmar, Patrick Grossmann, Sara Carvalho, Johan Bussink, Ren{\'e} Monshouwer, Benjamin Haibe-Kains, Derek Rietveld, et~al.
\newblock Decoding tumour phenotype by noninvasive imaging using a quantitative radiomics approach.
\newblock \emph{Nature communications}, 5\penalty0 (1):\penalty0 4006, 2014.

\bibitem[Ascher et~al.(1995)Ascher, Ruuth, and Wetton]{ascher1995implicit}
Uri~M Ascher, Steven~J Ruuth, and Brian~TR Wetton.
\newblock Implicit-explicit methods for time-dependent partial differential equations.
\newblock \emph{SIAM Journal on Numerical Analysis}, 32\penalty0 (3):\penalty0 797--823, 1995.

\bibitem[Baden et~al.(2024)Baden, Swaminathan, Almyroudis, Angarone, Baluch, Barros, Buss, Cohen, Cooper, Chiang, et~al.]{baden2024prevention}
Lindsey~Robert Baden, Sankar Swaminathan, Nikolaos~G Almyroudis, Michael Angarone, Aliyah Baluch, Nicolas Barros, Brian Buss, Stuart Cohen, Brenda Cooper, Augusto~Dulanto Chiang, et~al.
\newblock Prevention and treatment of cancer-related infections, version 3.2024, nccn clinical practice guidelines in oncology.
\newblock \emph{Journal of the National Comprehensive Cancer Network}, 22\penalty0 (9):\penalty0 617--644, 2024.

\bibitem[Bhattacharya et~al.(2024)Bhattacharya, Singh, Jain, and Prasanna]{bhattacharya2024radgazegen}
Moinak Bhattacharya, Gagandeep Singh, Shubham Jain, and Prateek Prasanna.
\newblock Radgazegen: Radiomics and gaze-guided medical image generation using diffusion models.
\newblock \emph{arXiv preprint arXiv:2410.00307}, 2024.

\bibitem[Bhattacharya et~al.(2025{\natexlab{a}})Bhattacharya, Gupta, Singh, Chen, Singh, and Prasanna]{bhattacharya2025brainmrdiff}
Moinak Bhattacharya, Saumya Gupta, Annie Singh, Chao Chen, Gagandeep Singh, and Prateek Prasanna.
\newblock Brainmrdiff: A diffusion model for anatomically consistent brain mri synthesis.
\newblock \emph{arXiv preprint arXiv:2504.04532}, 2025{\natexlab{a}}.

\bibitem[Bhattacharya et~al.(2025{\natexlab{b}})Bhattacharya, Huang, Sher, Singh, Chen, and Prasanna]{bhattacharya2025immunodiff}
Moinak Bhattacharya, Judy Huang, Amna~F Sher, Gagandeep Singh, Chao Chen, and Prateek Prasanna.
\newblock Immunodiff: A diffusion model for immunotherapy response prediction in lung cancer.
\newblock \emph{arXiv preprint arXiv:2505.23675}, 2025{\natexlab{b}}.

\bibitem[Bj{\"o}rnsson et~al.(2019)Bj{\"o}rnsson, Borrebaeck, Elander, Gasslander, Gawel, Gustafsson, J{\"o}rnsten, Lee, Li, Lilja, et~al.]{bjornsson2019digital}
Bergthor Bj{\"o}rnsson, Carl Borrebaeck, Nils Elander, Thomas Gasslander, Danuta~R Gawel, Mika Gustafsson, Rebecka J{\"o}rnsten, Eun~Jung Lee, Xinxiu Li, Sandra Lilja, et~al.
\newblock Digital twins to personalize medicine.
\newblock \emph{Genome medicine}, 12\penalty0 (1):\penalty0 4, 2019.

\bibitem[Brandstetter et~al.(2022)Brandstetter, Worrall, and Welling]{brandstetter2022message}
Johannes Brandstetter, Daniel Worrall, and Max Welling.
\newblock Message passing neural pde solvers.
\newblock \emph{arXiv preprint arXiv:2202.03376}, 2022.

\bibitem[Bruynseels et~al.(2018)Bruynseels, Santoni~de Sio, and Van~den Hoven]{bruynseels2018digital}
Koen Bruynseels, Filippo Santoni~de Sio, and Jeroen Van~den Hoven.
\newblock Digital twins in health care: ethical implications of an emerging engineering paradigm.
\newblock \emph{Frontiers in genetics}, 9:\penalty0 31, 2018.

\bibitem[Chaudhuri et~al.(2023)Chaudhuri, Pash, Hormuth, Lorenzo, Kapteyn, Wu, Lima, Yankeelov, and Willcox]{chaudhuri2023predictive}
Anirban Chaudhuri, Graham Pash, David~A Hormuth, Guillermo Lorenzo, Michael Kapteyn, Chengyue Wu, Ernesto~ABF Lima, Thomas~E Yankeelov, and Karen Willcox.
\newblock Predictive digital twin for optimizing patient-specific radiotherapy regimens under uncertainty in high-grade gliomas.
\newblock \emph{Frontiers in Artificial Intelligence}, 6:\penalty0 1222612, 2023.

\bibitem[Chen et~al.(2018)Chen, Rubanova, Bettencourt, and Duvenaud]{chen2018neural}
Ricky~TQ Chen, Yulia Rubanova, Jesse Bettencourt, and David~K Duvenaud.
\newblock Neural ordinary differential equations.
\newblock \emph{Advances in neural information processing systems}, 31, 2018.

\bibitem[Corral-Acero et~al.(2020)Corral-Acero, Margara, Marciniak, Rodero, Loncaric, Feng, Gilbert, Fernandes, Bukhari, Wajdan, et~al.]{corral2020digital}
Jorge Corral-Acero, Francesca Margara, Maciej Marciniak, Cristobal Rodero, Filip Loncaric, Yingjing Feng, Andrew Gilbert, Joao~F Fernandes, Hassaan~A Bukhari, Ali Wajdan, et~al.
\newblock The ‘digital twin’to enable the vision of precision cardiology.
\newblock \emph{European heart journal}, 41\penalty0 (48):\penalty0 4556--4564, 2020.

\bibitem[Cristini \& Lowengrub(2010)Cristini and Lowengrub]{cristini2010multiscale}
Vittorio Cristini and John Lowengrub.
\newblock \emph{Multiscale modeling of cancer: an integrated experimental and mathematical modeling approach}.
\newblock Cambridge University Press, 2010.

\bibitem[Du et~al.(2020)Du, Futoma, and Doshi-Velez]{du2020model}
Jianzhun Du, Joseph Futoma, and Finale Doshi-Velez.
\newblock Model-based reinforcement learning for semi-markov decision processes with neural odes.
\newblock \emph{Advances in Neural Information Processing Systems}, 33:\penalty0 19805--19816, 2020.

\bibitem[Evans(2022)]{evans2022partial}
Lawrence~C Evans.
\newblock \emph{Partial differential equations}, volume~19.
\newblock American mathematical society, 2022.

\bibitem[Fields et~al.(2024)Fields, Calabrese, Mongan, Cha, Hess, Sugrue, Chang, Luks, Villanueva-Meyer, Rauschecker, et~al.]{fields2024university}
Brandon~KK Fields, Evan Calabrese, John Mongan, Soonmee Cha, Christopher~P Hess, Leo~P Sugrue, Susan~M Chang, Tracy~L Luks, Javier~E Villanueva-Meyer, Andreas~M Rauschecker, et~al.
\newblock The university of california san francisco adult longitudinal post-treatment diffuse glioma mri dataset.
\newblock \emph{Radiology: Artificial Intelligence}, 6\penalty0 (4):\penalty0 e230182, 2024.

\bibitem[Fisher(1937)]{fisher1937wave}
Ronald~Aylmer Fisher.
\newblock The wave of advance of advantageous genes.
\newblock \emph{Annals of eugenics}, 7\penalty0 (4):\penalty0 355--369, 1937.

\bibitem[Fowler(1989)]{fowler1989linear}
John~F Fowler.
\newblock The linear-quadratic formula and progress in fractionated radiotherapy.
\newblock \emph{The British journal of radiology}, 62\penalty0 (740):\penalty0 679--694, 1989.

\bibitem[Gerlee(2013)]{gerlee2013model}
Philip Gerlee.
\newblock The model muddle: in search of tumor growth laws.
\newblock \emph{Cancer research}, 73\penalty0 (8):\penalty0 2407--2411, 2013.

\bibitem[Gradishar et~al.(2024)Gradishar, Moran, Abraham, Abramson, Aft, Agnese, Allison, Anderson, Bailey, Burstein, et~al.]{gradishar2024breast}
William~J Gradishar, Meena~S Moran, Jame Abraham, Vandana Abramson, Rebecca Aft, Doreen Agnese, Kimberly~H Allison, Bethany Anderson, Janet Bailey, Harold~J Burstein, et~al.
\newblock Breast cancer, version 3.2024, nccn clinical practice guidelines in oncology.
\newblock \emph{Journal of the National Comprehensive Cancer Network}, 22\penalty0 (5):\penalty0 331--357, 2024.

\bibitem[Grieves(2023)]{grieves2023digital}
Michael~W Grieves.
\newblock Digital twins: past, present, and future.
\newblock In \emph{The digital twin}, pp.\  97--121. Springer, 2023.

\bibitem[Group et~al.(2011)]{early2011effect}
Early Breast Cancer Trialists'~Collaborative Group et~al.
\newblock Effect of radiotherapy after breast-conserving surgery on 10-year recurrence and 15-year breast cancer death: meta-analysis of individual patient data for 10 801 women in 17 randomised trials.
\newblock \emph{The Lancet}, 378\penalty0 (9804):\penalty0 1707--1716, 2011.

\bibitem[Gupta \& Brandstetter(2022)Gupta and Brandstetter]{gupta2022towards}
Jayesh~K Gupta and Johannes Brandstetter.
\newblock Towards multi-spatiotemporal-scale generalized pde modeling.
\newblock \emph{arXiv preprint arXiv:2209.15616}, 2022.

\bibitem[Hao et~al.(2022)Hao, Liu, Zhang, Ying, Feng, Su, and Zhu]{hao2022physics}
Zhongkai Hao, Songming Liu, Yichi Zhang, Chengyang Ying, Yao Feng, Hang Su, and Jun Zhu.
\newblock Physics-informed machine learning: A survey on problems, methods and applications.
\newblock \emph{arXiv preprint arXiv:2211.08064}, 2022.

\bibitem[Harrell \& Levy(2022)Harrell and Levy]{harrell2022regression}
Frank~E Harrell and Drew~Griffin Levy.
\newblock Regression modeling strategies.
\newblock \emph{R package version}, pp.\  6--3, 2022.

\bibitem[Hundsdorfer \& Verwer(2013)Hundsdorfer and Verwer]{hundsdorfer2013numerical}
Willem Hundsdorfer and Jan~G Verwer.
\newblock \emph{Numerical solution of time-dependent advection-diffusion-reaction equations}, volume~33.
\newblock Springer Science \& Business Media, 2013.

\bibitem[Jarrett et~al.(2018)Jarrett, Lima, Hormuth, McKenna, Feng, Ekrut, Resende, Brock, and Yankeelov]{jarrett2018mathematical}
Angela~M Jarrett, Ernesto~ABF Lima, David~A Hormuth, Matthew~T McKenna, Xinzeng Feng, David~A Ekrut, Anna Claudia~M Resende, Amy Brock, and Thomas~E Yankeelov.
\newblock Mathematical models of tumor cell proliferation: A review of the literature.
\newblock \emph{Expert review of anticancer therapy}, 18\penalty0 (12):\penalty0 1271--1286, 2018.

\bibitem[Katsoulakis et~al.(2024)Katsoulakis, Wang, Wu, Shahriyari, Fletcher, Liu, Achenie, Liu, Jackson, Xiao, et~al.]{katsoulakis2024digital}
Evangelia Katsoulakis, Qi~Wang, Huanmei Wu, Leili Shahriyari, Richard Fletcher, Jinwei Liu, Luke Achenie, Hongfang Liu, Pamela Jackson, Ying Xiao, et~al.
\newblock Digital twins for health: a scoping review.
\newblock \emph{NPJ digital medicine}, 7\penalty0 (1):\penalty0 77, 2024.

\bibitem[Katzman et~al.(2018)Katzman, Shaham, Cloninger, Bates, Jiang, and Kluger]{katzman2018deepsurv}
Jared~L Katzman, Uri Shaham, Alexander Cloninger, Jonathan Bates, Tingting Jiang, and Yuval Kluger.
\newblock Deepsurv: personalized treatment recommender system using a cox proportional hazards deep neural network.
\newblock \emph{BMC medical research methodology}, 18\penalty0 (1):\penalty0 24, 2018.

\bibitem[Kickingereder et~al.(2016)Kickingereder, Burth, Wick, G{\"o}tz, Eidel, Schlemmer, Maier-Hein, Wick, Bendszus, Radbruch, et~al.]{kickingereder2016radiomic}
Philipp Kickingereder, Sina Burth, Antje Wick, Michael G{\"o}tz, Oliver Eidel, Heinz-Peter Schlemmer, Klaus~H Maier-Hein, Wolfgang Wick, Martin Bendszus, Alexander Radbruch, et~al.
\newblock Radiomic profiling of glioblastoma: identifying an imaging predictor of patient survival with improved performance over established clinical and radiologic risk models.
\newblock \emph{Radiology}, 280\penalty0 (3):\penalty0 880--889, 2016.

\bibitem[Kolmogorov(1937)]{kolmogorov1937etude}
Andrei Kolmogorov.
\newblock {\'E}tude de l’{\'e}quation de la diffusion avec croissance de la quantit{\'e} de mati{\`e}re et son application {\`a} un probl{\`e}me biologigue.
\newblock \emph{Moscow Univ. Bull. Ser. Internat. Sect. A}, 1:\penalty0 1, 1937.

\bibitem[Konukoglu et~al.(2009)Konukoglu, Clatz, Menze, Stieltjes, Weber, Mandonnet, Delingette, and Ayache]{konukoglu2009image}
Ender Konukoglu, Olivier Clatz, Bjoern~H Menze, Bram Stieltjes, Marc-Andr{\'e} Weber, Emmanuel Mandonnet, Herv{\'e} Delingette, and Nicholas Ayache.
\newblock Image guided personalization of reaction-diffusion type tumor growth models using modified anisotropic eikonal equations.
\newblock \emph{IEEE transactions on medical imaging}, 29\penalty0 (1):\penalty0 77--95, 2009.

\bibitem[Kovachki et~al.(2023)Kovachki, Li, Liu, Azizzadenesheli, Bhattacharya, Stuart, and Anandkumar]{kovachki2023neural}
Nikola Kovachki, Zongyi Li, Burigede Liu, Kamyar Azizzadenesheli, Kaushik Bhattacharya, Andrew Stuart, and Anima Anandkumar.
\newblock Neural operator: Learning maps between function spaces with applications to pdes.
\newblock \emph{Journal of Machine Learning Research}, 24\penalty0 (89):\penalty0 1--97, 2023.

\bibitem[Krishnan et~al.(2017)Krishnan, Shalit, and Sontag]{krishnan2017structured}
Rahul Krishnan, Uri Shalit, and David Sontag.
\newblock Structured inference networks for nonlinear state space models.
\newblock In \emph{Proceedings of the AAAI conference on artificial intelligence}, volume~31, 2017.

\bibitem[Kuang et~al.(2024)Kuang, Dean, B~Jedlicki, Ouyang, Philippakis, Sontag, and Alaa]{kuang2024med}
Keying Kuang, Frances Dean, Jack B~Jedlicki, David Ouyang, Anthony Philippakis, David Sontag, and Ahmed~M Alaa.
\newblock Med-real2sim: Non-invasive medical digital twins using physics-informed self-supervised learning.
\newblock \emph{Advances in Neural Information Processing Systems}, 37:\penalty0 5757--5788, 2024.

\bibitem[Laubenbacher et~al.(2024)Laubenbacher, Mehrad, Shmulevich, and Trayanova]{laubenbacher2024digital}
Reinhard Laubenbacher, Borna Mehrad, Ilya Shmulevich, and Natalia Trayanova.
\newblock Digital twins in medicine.
\newblock \emph{Nature Computational Science}, 4\penalty0 (3):\penalty0 184--191, 2024.

\bibitem[Li et~al.(2020)Li, Kovachki, Azizzadenesheli, Liu, Bhattacharya, Stuart, and Anandkumar]{li2020fourier}
Zongyi Li, Nikola Kovachki, Kamyar Azizzadenesheli, Burigede Liu, Kaushik Bhattacharya, Andrew Stuart, and Anima Anandkumar.
\newblock Fourier neural operator for parametric partial differential equations.
\newblock \emph{arXiv preprint arXiv:2010.08895}, 2020.

\bibitem[Liu et~al.(2024)Liu, Ji, Gou, Fu, and Ye]{liu2024interpretable}
Pei Liu, Luping Ji, Jiaxiang Gou, Bo~Fu, and Mao Ye.
\newblock Interpretable vision-language survival analysis with ordinal inductive bias for computational pathology.
\newblock \emph{arXiv preprint arXiv:2409.09369}, 2024.

\bibitem[Liu et~al.(2025)Liu, Fuster-Garcia, Hovden, MacIntosh, Gr{\o}dem, Brandal, Lopez-Mateu, Sederevi{\v{c}}ius, Skogen, Schellhorn, et~al.]{liu2025treatment}
Qinghui Liu, Elies Fuster-Garcia, Ivar~Thokle Hovden, Bradley~J MacIntosh, Edvard~OS Gr{\o}dem, Petter Brandal, Carles Lopez-Mateu, Donatas Sederevi{\v{c}}ius, Karoline Skogen, Till Schellhorn, et~al.
\newblock Treatment-aware diffusion probabilistic model for longitudinal mri generation and diffuse glioma growth prediction.
\newblock \emph{IEEE Transactions on Medical Imaging}, 2025.

\bibitem[Lu et~al.(2021)Lu, Jin, Pang, Zhang, and Karniadakis]{lu2021learning}
Lu~Lu, Pengzhan Jin, Guofei Pang, Zhongqiang Zhang, and George~Em Karniadakis.
\newblock Learning nonlinear operators via deeponet based on the universal approximation theorem of operators.
\newblock \emph{Nature machine intelligence}, 3\penalty0 (3):\penalty0 218--229, 2021.

\bibitem[Mang et~al.(2012)Mang, Toma, Schuetz, Becker, Eckey, Mohr, Petersen, and Buzug]{mang2012biophysical}
Andreas Mang, Alina Toma, Tina~A Schuetz, Stefan Becker, Thomas Eckey, Christian Mohr, Dirk Petersen, and Thorsten~M Buzug.
\newblock Biophysical modeling of brain tumor progression: From unconditionally stable explicit time integration to an inverse problem with parabolic pde constraints for model calibration.
\newblock \emph{Medical Physics}, 39\penalty0 (7Part1):\penalty0 4444--4459, 2012.

\bibitem[Murray(1989)]{murray1989mathematical}
James~D Murray.
\newblock Mathematical biology, biomathematics, vol. 19.
\newblock \emph{Chapter}, 15:\penalty0 17--18, 1989.

\bibitem[Murray(2007)]{murray2007mathematical}
James~D Murray.
\newblock \emph{Mathematical biology: I. An introduction}, volume~17.
\newblock Springer Science \& Business Media, 2007.

\bibitem[Parker et~al.(2009)Parker, Mullins, Cheang, Leung, Voduc, Vickery, Davies, Fauron, He, Hu, et~al.]{parker2009supervised}
Joel~S Parker, Michael Mullins, Maggie~CU Cheang, Samuel Leung, David Voduc, Tammi Vickery, Sherri Davies, Christiane Fauron, Xiaping He, Zhiyuan Hu, et~al.
\newblock Supervised risk predictor of breast cancer based on intrinsic subtypes.
\newblock \emph{Journal of clinical oncology}, 27\penalty0 (8):\penalty0 1160--1167, 2009.

\bibitem[Powathil et~al.(2013)Powathil, Adamson, and Chaplain]{powathil2013towards}
Gibin~G Powathil, Douglas~JA Adamson, and Mark~AJ Chaplain.
\newblock Towards predicting the response of a solid tumour to chemotherapy and radiotherapy treatments: clinical insights from a computational model.
\newblock \emph{PLoS computational biology}, 9\penalty0 (7):\penalty0 e1003120, 2013.

\bibitem[Raissi et~al.(2019)Raissi, Perdikaris, and Karniadakis]{raissi2019physics}
Maziar Raissi, Paris Perdikaris, and George~E Karniadakis.
\newblock Physics-informed neural networks: A deep learning framework for solving forward and inverse problems involving nonlinear partial differential equations.
\newblock \emph{Journal of Computational physics}, 378:\penalty0 686--707, 2019.

\bibitem[Rockne et~al.(2010)Rockne, Rockhill, Mrugala, Spence, Kalet, Hendrickson, Lai, Cloughesy, Alvord, and Swanson]{rockne2010predicting}
Rockhill Rockne, JK~Rockhill, M~Mrugala, AM~Spence, I~Kalet, K~Hendrickson, A~Lai, T~Cloughesy, EC~Alvord, and KR~Swanson.
\newblock Predicting the efficacy of radiotherapy in individual glioblastoma patients in vivo: a mathematical modeling approach.
\newblock \emph{Physics in Medicine \& Biology}, 55\penalty0 (12):\penalty0 3271, 2010.

\bibitem[Rockne et~al.(2015)Rockne, Trister, Jacobs, Hawkins-Daarud, Neal, Hendrickson, Mrugala, Rockhill, Kinahan, Krohn, et~al.]{rockne2015patient}
Russell~C Rockne, Andrew~D Trister, Joshua Jacobs, Andrea~J Hawkins-Daarud, Maxwell~L Neal, Kristi Hendrickson, Maciej~M Mrugala, Jason~K Rockhill, Paul Kinahan, Kenneth~A Krohn, et~al.
\newblock A patient-specific computational model of hypoxia-modulated radiation resistance in glioblastoma using 18f-fmiso-pet.
\newblock \emph{Journal of the Royal Society Interface}, 12\penalty0 (103):\penalty0 20141174, 2015.

\bibitem[Ronneberger et~al.(2015)Ronneberger, Fischer, and Brox]{ronneberger2015u}
Olaf Ronneberger, Philipp Fischer, and Thomas Brox.
\newblock U-net: Convolutional networks for biomedical image segmentation.
\newblock In \emph{International Conference on Medical image computing and computer-assisted intervention}, pp.\  234--241. Springer, 2015.

\bibitem[Ruthotto \& Haber(2020)Ruthotto and Haber]{ruthotto2020deep}
Lars Ruthotto and Eldad Haber.
\newblock Deep neural networks motivated by partial differential equations.
\newblock \emph{Journal of Mathematical Imaging and Vision}, 62\penalty0 (3):\penalty0 352--364, 2020.

\bibitem[Sanchez-Gonzalez et~al.(2020)Sanchez-Gonzalez, Godwin, Pfaff, Ying, Leskovec, and Battaglia]{sanchez2020learning}
Alvaro Sanchez-Gonzalez, Jonathan Godwin, Tobias Pfaff, Rex Ying, Jure Leskovec, and Peter Battaglia.
\newblock Learning to simulate complex physics with graph networks.
\newblock In \emph{International conference on machine learning}, pp.\  8459--8468. PMLR, 2020.

\bibitem[Shi et~al.(2015)Shi, Chen, Wang, Yeung, Wong, and Woo]{shi2015convolutional}
Xingjian Shi, Zhourong Chen, Hao Wang, Dit-Yan Yeung, Wai-Kin Wong, and Wang-chun Woo.
\newblock Convolutional lstm network: A machine learning approach for precipitation nowcasting.
\newblock \emph{Advances in neural information processing systems}, 28, 2015.

\bibitem[Siegel et~al.(2025)Siegel, Kratzer, Giaquinto, Sung, and Jemal]{siegel2025cancer}
Rebecca~L Siegel, Tyler~B Kratzer, Angela~N Giaquinto, Hyuna Sung, and Ahmedin Jemal.
\newblock Cancer statistics, 2025.
\newblock \emph{Ca}, 75\penalty0 (1):\penalty0 10, 2025.

\bibitem[Stamatakos et~al.(2010)Stamatakos, Kolokotroni, Dionysiou, Georgiadi, and Desmedt]{stamatakos2010advanced}
Georgios~S Stamatakos, EA~Kolokotroni, DD~Dionysiou, E~Ch Georgiadi, and Christine Desmedt.
\newblock An advanced discrete state--discrete event multiscale simulation model of the response of a solid tumor to chemotherapy: Mimicking a clinical study.
\newblock \emph{Journal of theoretical biology}, 266\penalty0 (1):\penalty0 124--139, 2010.

\bibitem[Stupp et~al.(2005)Stupp, Mason, Van Den~Bent, Weller, Fisher, Taphoorn, Belanger, Brandes, Marosi, Bogdahn, et~al.]{stupp2005radiotherapy}
Roger Stupp, Warren~P Mason, Martin~J Van Den~Bent, Michael Weller, Barbara Fisher, Martin~JB Taphoorn, Karl Belanger, Alba~A Brandes, Christine Marosi, Ulrich Bogdahn, et~al.
\newblock Radiotherapy plus concomitant and adjuvant temozolomide for glioblastoma.
\newblock \emph{New England journal of medicine}, 352\penalty0 (10):\penalty0 987--996, 2005.

\bibitem[Sun et~al.(2020)Sun, Gao, Pan, and Wang]{sun2020surrogate}
Luning Sun, Han Gao, Shaowu Pan, and Jian-Xun Wang.
\newblock Surrogate modeling for fluid flows based on physics-constrained deep learning without simulation data.
\newblock \emph{Computer Methods in Applied Mechanics and Engineering}, 361:\penalty0 112732, 2020.

\bibitem[Swanson et~al.(2000)Swanson, Alvord~Jr, and Murray]{swanson2000quantitative}
Kristin~R Swanson, Ellsworth~C Alvord~Jr, and James~D Murray.
\newblock A quantitative model for differential motility of gliomas in grey and white matter.
\newblock \emph{Cell proliferation}, 33\penalty0 (5):\penalty0 317--329, 2000.

\bibitem[Swanson et~al.(2002)Swanson, Alvord, and Murray]{swanson2002virtual}
Kristin~R Swanson, Ellsworth~C Alvord, and JD~Murray.
\newblock Virtual brain tumours (gliomas) enhance the reality of medical imaging and highlight inadequacies of current therapy.
\newblock \emph{British journal of cancer}, 86\penalty0 (1):\penalty0 14--18, 2002.

\bibitem[Swanson et~al.(2011)Swanson, Rockne, Claridge, Chaplain, Alvord~Jr, and Anderson]{swanson2011quantifying}
Kristin~R Swanson, Russell~C Rockne, Jonathan Claridge, Mark~A Chaplain, Ellsworth~C Alvord~Jr, and Alexander~RA Anderson.
\newblock Quantifying the role of angiogenesis in malignant progression of gliomas: in silico modeling integrates imaging and histology.
\newblock \emph{Cancer research}, 71\penalty0 (24):\penalty0 7366--7375, 2011.

\bibitem[Thom{\'e}e(2007)]{thomee2007galerkin}
Vidar Thom{\'e}e.
\newblock \emph{Galerkin finite element methods for parabolic problems}, volume~25.
\newblock Springer Science \& Business Media, 2007.

\bibitem[Tracqui et~al.(1995)Tracqui, Cruywagen, Woodward, Bartoo, Murray, and Alvord~Jr]{tracqui1995mathematical}
P~Tracqui, GC~Cruywagen, DE~Woodward, GT~Bartoo, JD~Murray, and EC~Alvord~Jr.
\newblock A mathematical model of glioma growth: the effect of chemotherapy on spatio-temporal growth.
\newblock \emph{Cell proliferation}, 28\penalty0 (1):\penalty0 17--31, 1995.

\bibitem[Unkelbach et~al.(2014)Unkelbach, Menze, Konukoglu, Dittmann, Le, Ayache, and Shih]{unkelbach2014radiotherapy}
Jan Unkelbach, Bjoern~H Menze, Ender Konukoglu, Florian Dittmann, Matthieu Le, Nicholas Ayache, and Helen~A Shih.
\newblock Radiotherapy planning for glioblastoma based on a tumor growth model: improving target volume delineation.
\newblock \emph{Physics in Medicine \& Biology}, 59\penalty0 (3):\penalty0 747, 2014.

\bibitem[Van~Calster et~al.(2019)Van~Calster, McLernon, Van~Smeden, Wynants, Steyerberg, diagnostic tests, and prediction models’ of the STRATOS initiative Bossuyt Patrick Collins Gary S. Macaskill Petra McLernon David J. Moons Karel GM Steyerberg Ewout W. Van Calster Ben~van Smeden Maarten Vickers Andrew~J.]{van2019calibration}
Ben Van~Calster, David~J McLernon, Maarten Van~Smeden, Laure Wynants, Ewout~W Steyerberg, Topic Group~‘Evaluating diagnostic tests, and prediction models’ of the STRATOS initiative Bossuyt Patrick Collins Gary S. Macaskill Petra McLernon David J. Moons Karel GM Steyerberg Ewout W. Van Calster Ben~van Smeden Maarten Vickers Andrew~J.
\newblock Calibration: the achilles heel of predictive analytics.
\newblock \emph{BMC medicine}, 17\penalty0 (1):\penalty0 230, 2019.

\bibitem[Varga et~al.()]{vargamatrix}
Richard~S Varga et~al.
\newblock Matrix iterative analysis [electronic resource].

\bibitem[Venkatesh et~al.(2022)Venkatesh, Raza, and Kvedar]{venkatesh2022health}
Kaushik~P Venkatesh, Marium~M Raza, and Joseph~C Kvedar.
\newblock Health digital twins as tools for precision medicine: Considerations for computation, implementation, and regulation.
\newblock \emph{NPJ digital medicine}, 5\penalty0 (1):\penalty0 150, 2022.

\bibitem[Verhaak et~al.(2010)Verhaak, Hoadley, Purdom, Wang, Qi, Wilkerson, Miller, Ding, Golub, Mesirov, et~al.]{verhaak2010integrated}
Roel~GW Verhaak, Katherine~A Hoadley, Elizabeth Purdom, Victoria Wang, Yuan Qi, Matthew~D Wilkerson, C~Ryan Miller, Li~Ding, Todd Golub, Jill~P Mesirov, et~al.
\newblock Integrated genomic analysis identifies clinically relevant subtypes of glioblastoma characterized by abnormalities in pdgfra, idh1, egfr, and nf1.
\newblock \emph{Cancer cell}, 17\penalty0 (1):\penalty0 98--110, 2010.

\bibitem[Willard et~al.(2022)Willard, Jia, Xu, Steinbach, and Kumar]{willard2022integrating}
Jared Willard, Xiaowei Jia, Shaoming Xu, Michael Steinbach, and Vipin Kumar.
\newblock Integrating scientific knowledge with machine learning for engineering and environmental systems.
\newblock \emph{ACM Computing Surveys}, 55\penalty0 (4):\penalty0 1--37, 2022.

\bibitem[Yanagisawa(2023)]{yanagisawa2023proper}
Hiroki Yanagisawa.
\newblock Proper scoring rules for survival analysis.
\newblock In \emph{International Conference on Machine Learning}, pp.\  39165--39182. PMLR, 2023.

\bibitem[Yankeelov et~al.(2015)Yankeelov, Quaranta, Evans, and Rericha]{yankeelov2015toward}
Thomas~E Yankeelov, Vito Quaranta, Katherine~J Evans, and Erin~C Rericha.
\newblock Toward a science of tumor forecasting for clinical oncology.
\newblock \emph{Cancer research}, 75\penalty0 (6):\penalty0 918--923, 2015.

\end{thebibliography}
\bibliographystyle{iclr2026_conference}

\appendix

\clearpage
\section*{Appendix} Here we provide additional details, results, theoretical proofs and limitations of the proposed SoC-DT framework.

\section{Datasets}
\label{sec:appendix_datasets}In this work, we proposed a plug-and-play framework to generate synthetic datasets for experimentation. We synthesize three datasets, namely Adult Glioma (AG) for brain cancer, Hepatocellular Carcinoma (HCC) for liver cancer and Neoadjuvant Chemotherapy (NAC) for breast cancer. The Synthetic generation pipeline (shown in Figure~\ref{fig:architecture}.A), 
which is a reactive diffusion model generates these datasets. Details of the parameters of the reaction-diffusion model are shown in Figure~\ref{fig:reactive_diffusion_hyperparametersr} 
Details of the generated datasets are discussed below:\\
\textbf{AG (Brain Cancer):} For adult glioma, we seed lobulated, infiltrative masks within a brain phantom and evolve tumor occupancy with spatially varying diffusion/proliferation maps, while SoC impulses (surgery, fractionated RT, weekly chemo) modulate daily dynamics; imaging volumes (T1C, T2W, T2F) are rendered with bias fields, k-space low-pass filtering, and Rician noise, and per-timepoint masks/images are saved as .png/.npy (shown in Figure~\ref{fig:appendix_ag}) with metadata for kinetics, genomics, and schedule. Detailed molecular information of the generated patients 
are shown in Table~\ref{tab:appendix_combined}.\\
\textbf{NAC (Breast Cancer):}
Breast NAC cases are generated on a torso–breast phantom with FGT-biased, possibly multifocal seeds; voxelwise PK curves (simplified Tofts + AIF) drive multi-phase DCE signals (pre/early/mid/late), scanner “styles” add motion/ghosting/Gibbs artifacts (shown in Figure~\ref{fig:appendix_nac}), and patient-level response phenotypes (Responder→Hyperprogressor) modulate cycle-wise kill and regrowth, yielding RECIST-like diameters, optional PK proxy maps, and outcomes (pCR, survival). Details of molecular information of the generated patients are shown in Table~\ref{tab:appendix_combined}.\\
\textbf{HCC (Liver Cancer):} For HCC, liver, vessels, and tumor are modeled with arterial hyperenhancement and portal/delayed washout; TACE events cause devascularization and optional lipiodol-like deposits, while SBRT applies a spatial dose map and LQ-inspired kill, and CT images are synthesized per phase (arterial/portal/delayed) under soft-tissue windowing with bow-tie bias and Gaussian noise (shown in Figure~\ref{fig:appendix_hcc}). Detailed demographics are shown in Table~\ref{tab:appendix_combined}.\\
Across datasets, scan calendars are fixed or randomly sampled over clinically plausible horizons, and masks derived from thresholding the evolving field with small stochasticity to reflect acquisition variability.
Each patient has 
per-timepoint file paths and quantitative fields (e.g., tumor area, RECIST-like diameter), and a companion patient-level table records genomics/biomarkers, kinetic parameters, intervention logs, and derived outcomes (event flags, survival days, toxicity where applicable).
Design choices emphasize controllable realism (bias fields, k-space filtering, ghosting, Gibbs, windowing) and heterogeneity (white/gray matter or FGT/liver modulation) while remaining dependency-light and reproducible via a fixed RNG seed.
\begin{figure}
    \centering
    \includegraphics[width=0.8\linewidth]{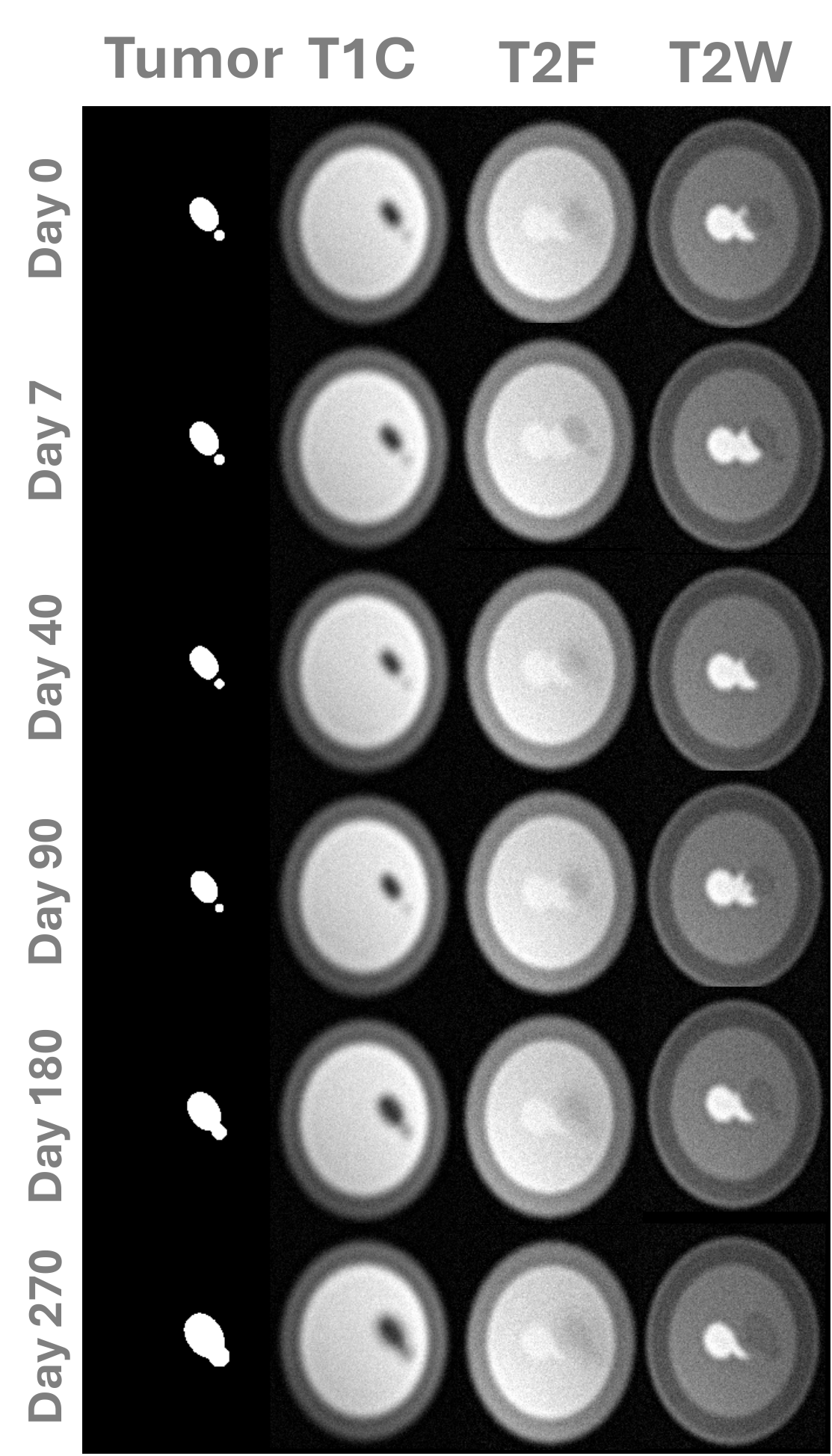}
    \caption{\textbf{AG (Brain Cancer).} An example case from the generated AG data where we show different phantom MR sequences (T1C, T2F, and T2W) along with the tumor masks for multiple timepoint visits.}
    \label{fig:appendix_ag}
\end{figure}

\begin{figure}
    \centering
    \includegraphics[width=1\linewidth]{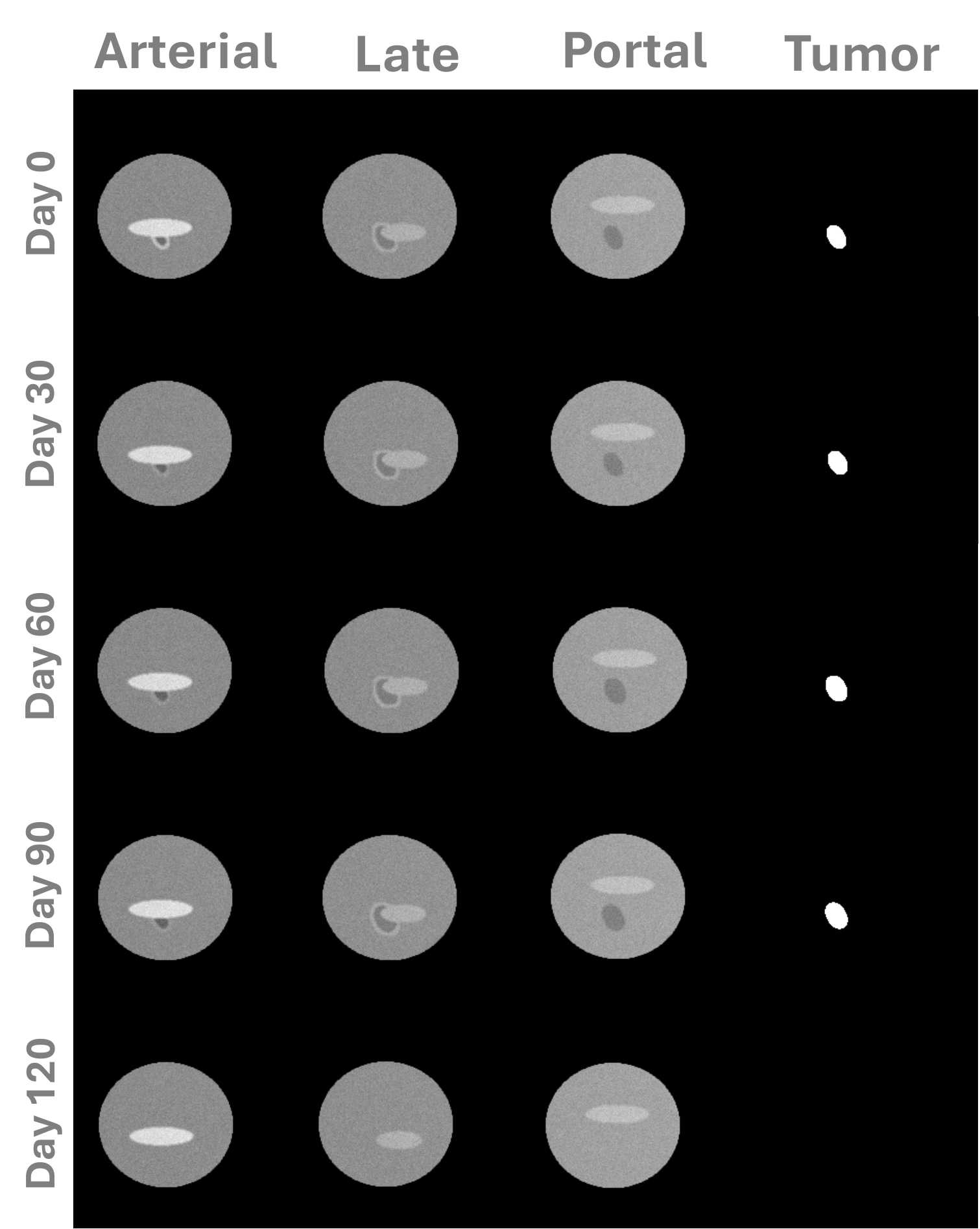}
    \caption{\textbf{HCC (Liver Cancer).} An example case from the generated HCC data where we show different  phase enhanced CT images (Arterial, Portal and Late) along with the tumor masks for multiple timepoint visits.}
    \label{fig:appendix_hcc}
\end{figure}

\begin{figure}
    \centering
    \includegraphics[width=1\linewidth]{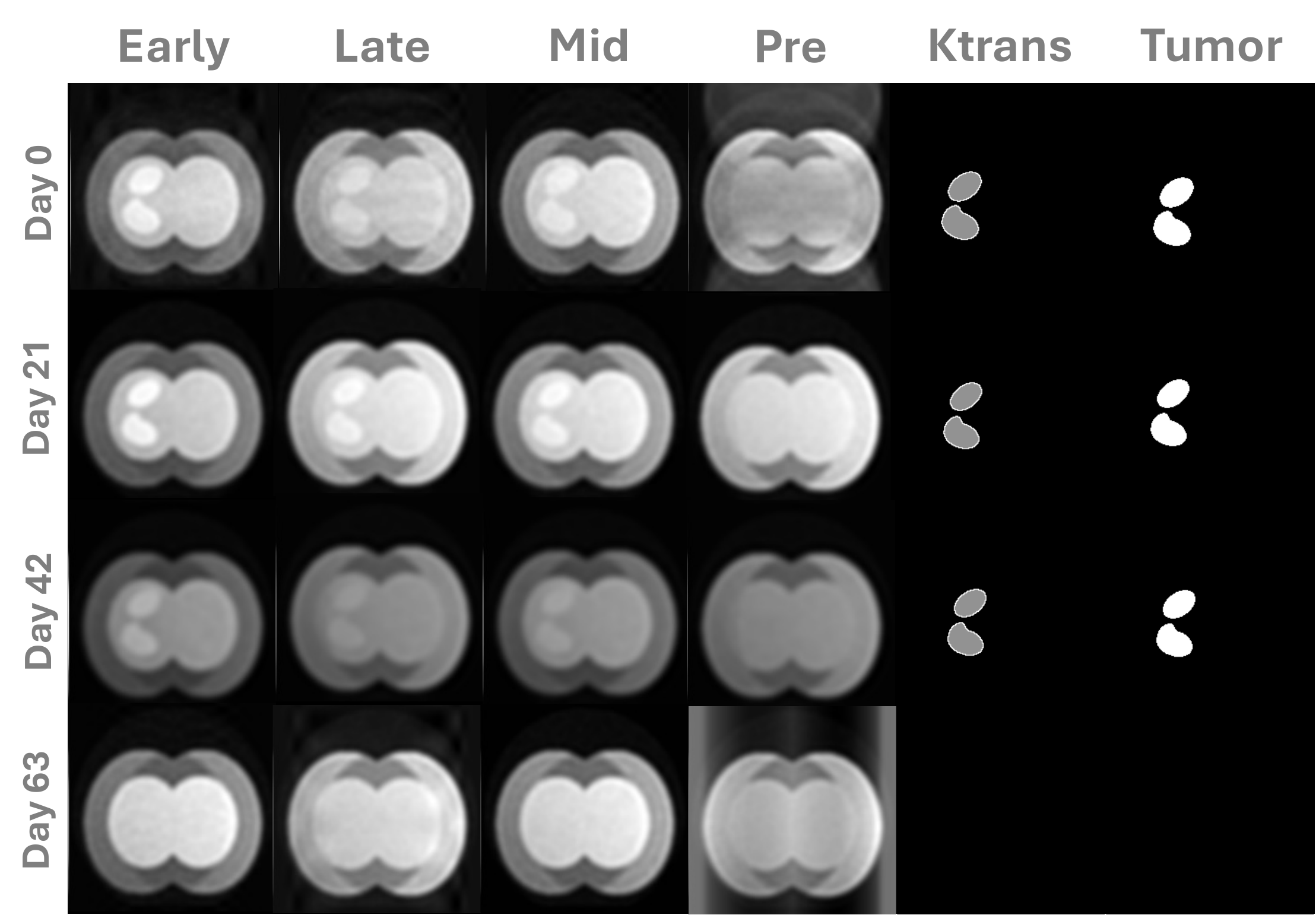}
    \caption{\textbf{NAC (Breast Cancer).} An example case from the generated NAC data where we show different  phases DCE MRI along with the tumor masks and K$_{trans}$ image for multiple timepoint visits.}
    \label{fig:appendix_nac}
\end{figure}

\begin{table}[ht]
\centering
\caption{Cohort characteristics and biomarker distributions across synthetic datasets: Adult Glioma (AG), Breast NAC, and Hepatocellular Carcinoma (HCC).}
\begin{tabular}{l l c}
\hline
\textbf{Marker / Variable} & \textbf{Category} & \textbf{Count} \\
\hline
\multicolumn{3}{c}{\textbf{Adult Glioma (AG)}} \\
IDH1      & Wildtype     & 136 \\
          & Mutant       & 64  \\
MGMT      & Unmethylated & 117 \\
          & Methylated   & 83  \\
EGFR      & Normal       & 132 \\
          & Amplified    & 68  \\
1p/19q    & Intact       & 178 \\
          & Codeleted    & 22  \\
CDKN2A/B  & Wildtype     & 139 \\
          & Deleted      & 61  \\
TP53      & Wildtype     & 111 \\
          & Mutant       & 89  \\
TERT      & Mutant       & 109 \\
          & Wildtype     & 91  \\
ATRX      & Wildtype     & 181 \\
          & Loss         & 19  \\
\hline
\multicolumn{3}{c}{\textbf{Breast NAC}} \\
ER        & Positive     & 138 \\
          & Negative     & 62  \\
PR        & Positive     & 129 \\
          & Negative     & 71  \\
HER2      & Negative     & 149 \\
          & Positive     & 51  \\
\hline
\multicolumn{3}{c}{\textbf{Hepatocellular Carcinoma (HCC)}} \\
ALBI grade & 1           & 111 \\
           & 2           & 77  \\
           & 3           & 12  \\
BCLC stage & A           & 97  \\
           & B           & 74  \\
           & C           & 29  \\
Event status & Death (1) & 135 \\
             & Censored (0) & 65 \\
\hline
\end{tabular}
\label{tab:appendix_combined}
\end{table}

\begin{figure}
    \centering
    \includegraphics[width=1\linewidth]{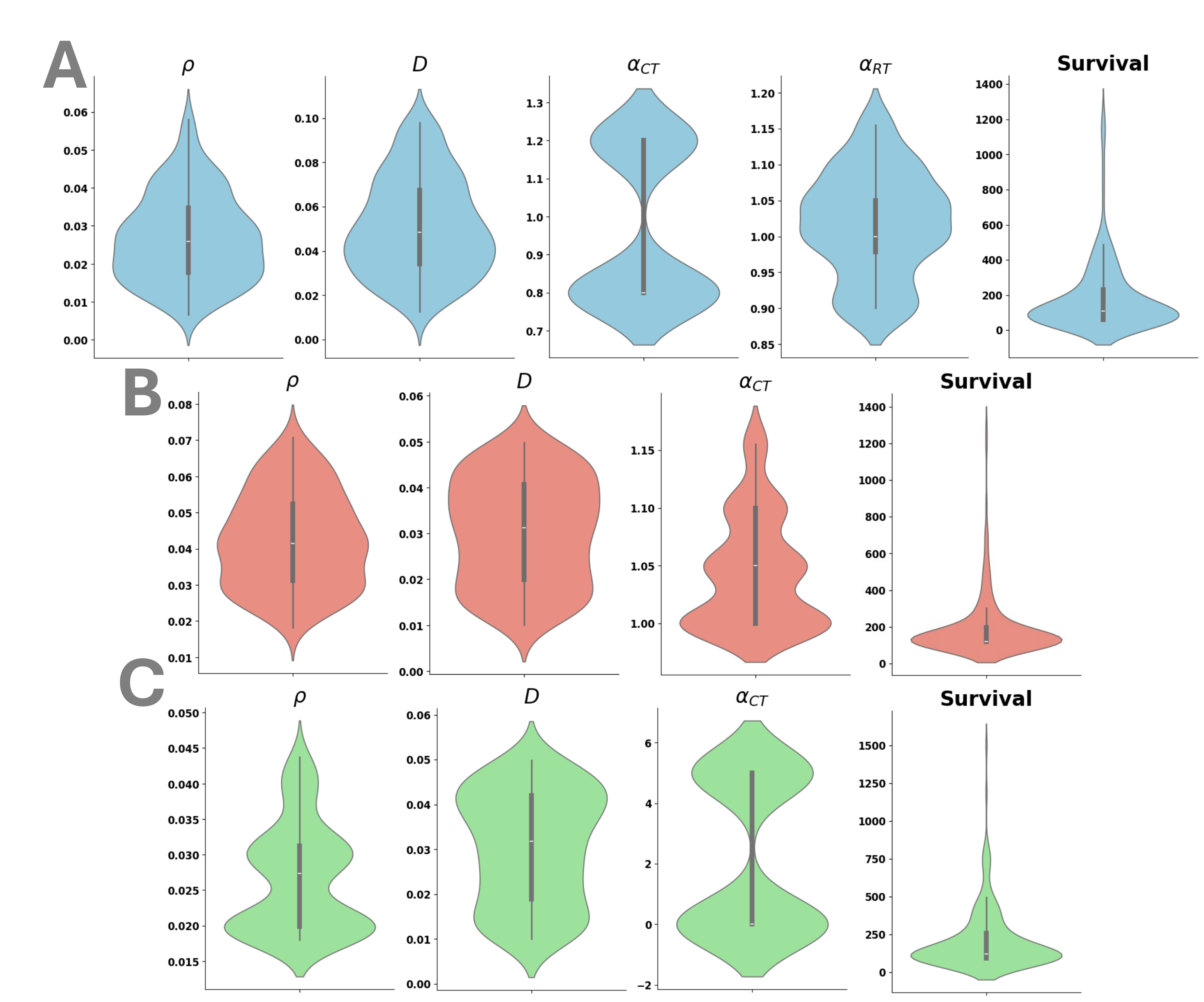}
    \caption{Box plots of different paramters of the Synthetic generator (shown in Figure~\ref{fig:architecture}.A) for datasets. AG hyperparameters are shown in A, HCC hyperparameters are shown in B and NAC hyperparameters are shown in C.}
    \label{fig:reactive_diffusion_hyperparametersr}
\end{figure}

\clearpage

\section{Details of Kill and Surgery Settings}
\label{app:killsurg}
In this section, we provide further explanation of the different \emph{kill} and \emph{surgery} settings reported in Table~\ref{tab:appendix_killsurgery}.

\subsection{Kill Settings}
\begin{itemize}
    \item \textbf{Additive:} The effect of cell kill is modeled as a direct additive reduction in the tumor population, proportional to treatment strength. This corresponds to a linear decrease in tumor density.
    \item \textbf{Saturation:} Introduces a nonlinear saturation term, where kill efficiency plateaus at higher doses, mimicking biological limits of treatment efficacy.
    \item \textbf{Synergy:} Models synergistic effects between treatments (e.g., chemotherapy + radiotherapy) using an interaction term, resulting in superlinear kill under combined interventions.
\end{itemize}

\begin{table*}[t]
\centering
\caption{Comparison of DSC results across Kill \& Surgery settings @ $\tau=0.5$.}
\begin{tabular}{lc}
\toprule
\textbf{Setting} & \textbf{DSC}(\textbf{$\mu \pm \sigma$})\\
\midrule 
Additive    & 0.8457 $\pm$ 0.0185 \\
Saturation  & 0.8468 $\pm$ 0.0188 \\
Synergy     & 0.8466 $\pm$ 0.0189 \\
\midrule
Mul         & 0.8457 $\pm$ 0.0185 \\
Morph\_op.  & 0.8413 $\pm$ 0.0243 \\
Rim         & 0.8450 $\pm$ 0.0186 \\
\bottomrule
\end{tabular}
\label{tab:appendix_killsurgery}
\end{table*}

\subsection{Surgery Settings}
\begin{itemize}
    \item \textbf{Multiplicative:} A multiplicative reduction of tumor density in the resected region, representing partial tumor removal.
    \item \textbf{Morphological Operation:} A morphological erosion applied to the tumor mask, mimicking resection margins where adjacent tissue may also be removed.
    \item \textbf{Rim:} Explicit removal of a peripheral rim around the tumor, modeling surgical clearance margins commonly practiced to reduce recurrence risk.
\end{itemize}

These alternative formulations allow us to stress-test robustness of the model against different plausible implementations of treatment operators, while keeping the overall simulation framework consistent.

\begin{figure}[h]
    \centering
    \includegraphics[width=1\linewidth]{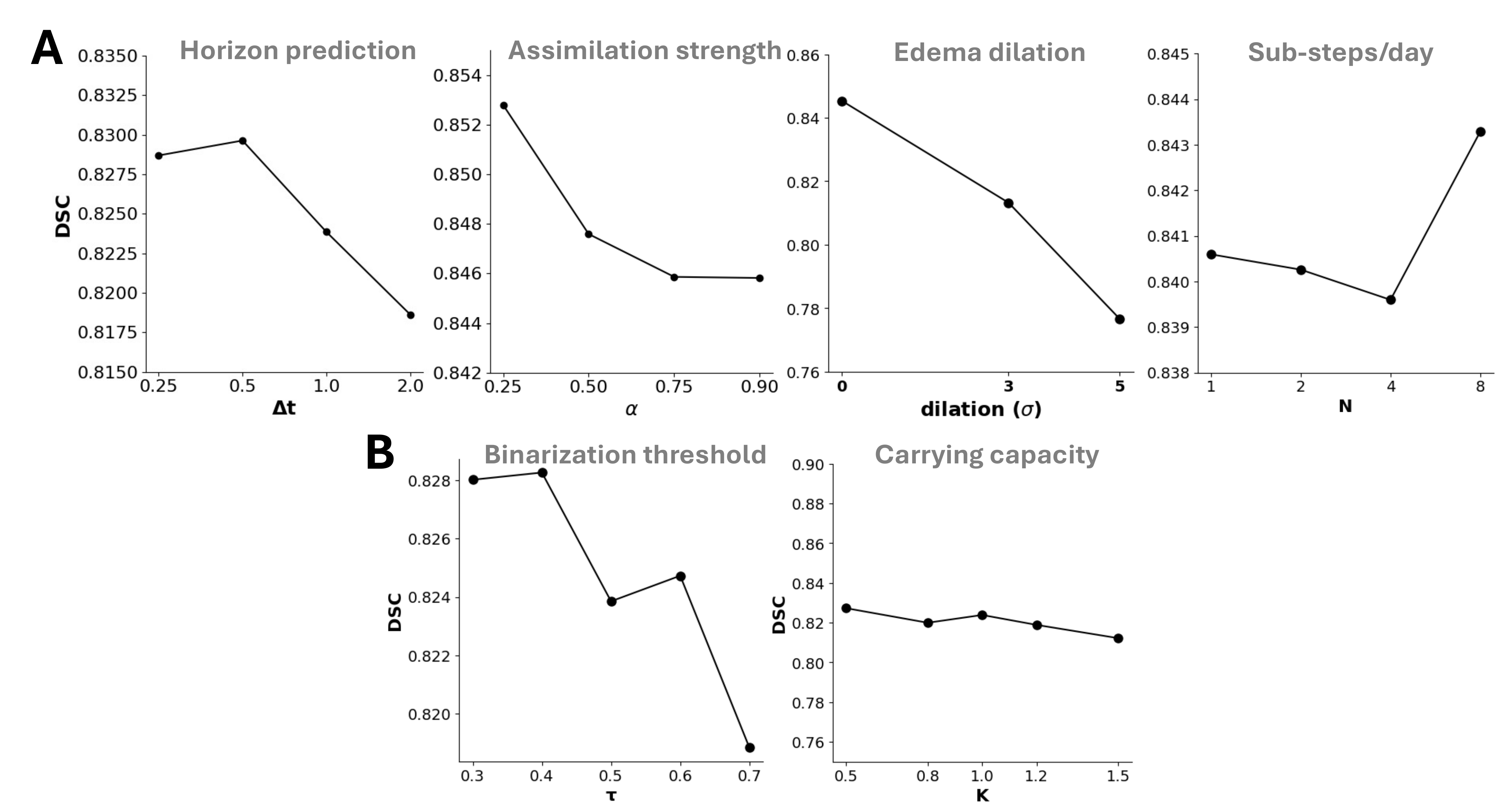}
    \caption{\textbf{Sensitivity analysis.}}
    \label{fig:appendix_sensitivity}
\end{figure}

\section{Additional results}
\label{appendix_results}

Additional quantitative and qualitative results are shown here:
\subsection{Quantitative results}  
We perform the sensitivity analysis on toy datasets. In Figure~\ref{fig:appendix_sensitivity}, we report the results of our sensitivity analysis across numerical, biological, and evaluation parameters. Despite being exposed to a broad range of stress tests---including variations in numerical step size ($\Delta t$), treatment coupling ($\alpha$), post-processing thresholds ($\tau$), saturation level ($K$), and edema mask dilation---our model demonstrates a striking degree of robustness. Across almost all perturbations, the Dice similarity coefficient (DSC) remains tightly clustered in the $0.82$--$0.85$ range, with standard deviations typically around $0.02$--$0.03$, indicating stable generalization across folds. Even in cases where aggressive settings led to catastrophic HD95 outliers (e.g., very large $\Delta t$ or heavy dilation), the average DSC degraded only modestly, underscoring that the core tumor segmentation signal is preserved. Importantly, the model is least sensitive to the choice of $\tau$ and remains stable across data splits, highlighting resilience to evaluation criteria and sampling variation. The clearest sensitivities arise from preprocessing (edema dilation) and overly large numerical steps, yet even there the performance drop is gradual rather than abrupt. Taken together, these results suggest that our framework is robust under numerical, biological, and evaluation perturbations, maintaining consistently strong segmentation performance across a wide range of stress conditions.

 \begin{figure}[ht]
    \centering
    \includegraphics[width=1\linewidth]{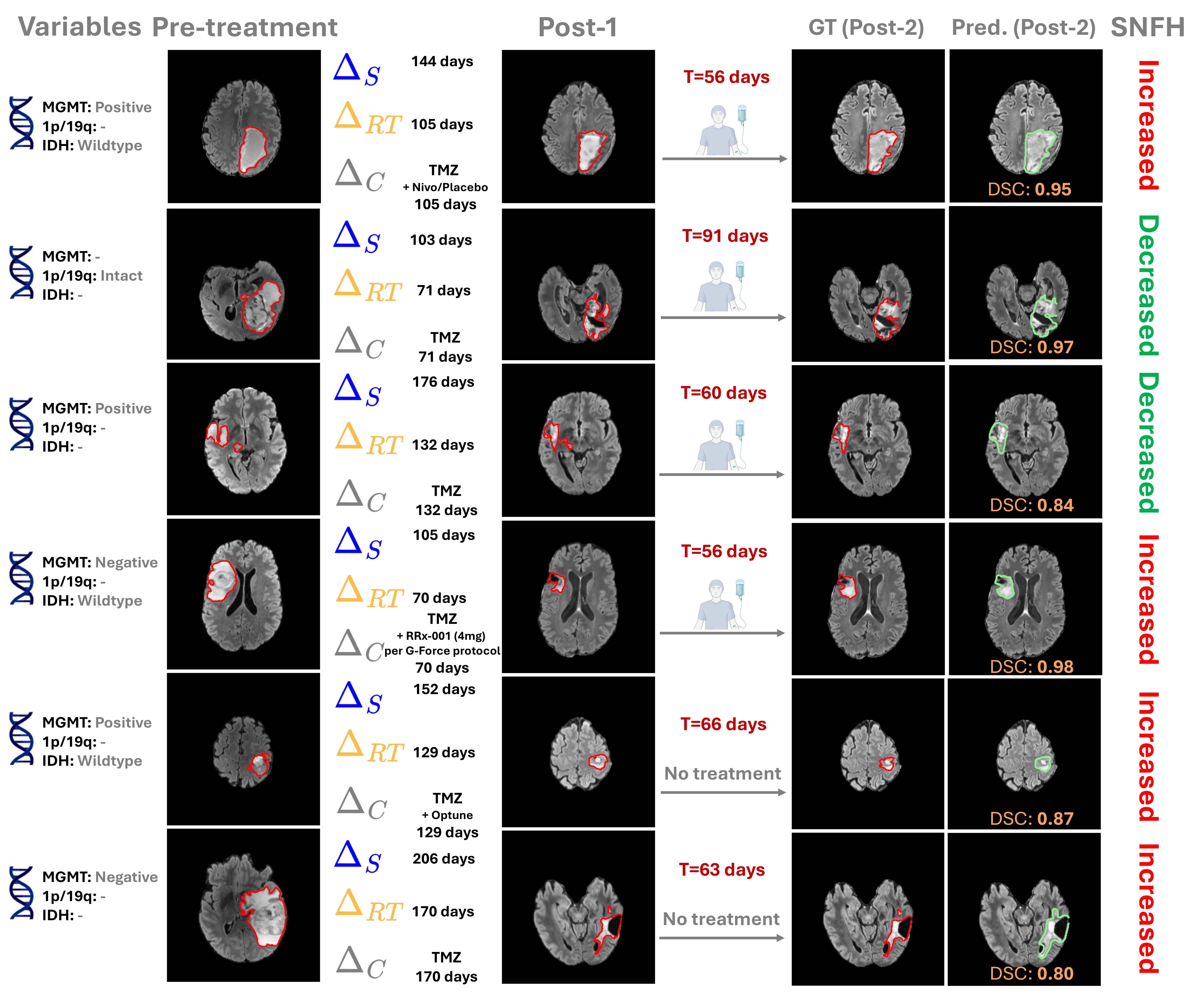}
    \caption{\textbf{Additional qualitative results.}}
    \label{fig:appendix_qualitative1}
\end{figure}

\subsection{Qualitative results} 
In Figure~\ref{fig:appendix_qualitative1}, we show additional results of our SoC-DT methods. And, we provide additional clarification regarding the timeline shown in Figure~\ref{fig:qualitative1} and Figure~\ref{fig:appendix_qualitative1}. 
We map the UCSF PostopGlioma spreadsheet columns to our notation as follows: $\Delta_{S}$ = \texttt{Days from 1st surgery/DX to 1st scan}, $\Delta_{RT}$ = \texttt{Days from 1st scan to 1st RT start (neg = RT first)}, $\Delta_{C}$ = \texttt{Days from 1st chemo start to 1st scan}, and $T$ = \texttt{Days from 1st scan to 2nd scan}. The column \texttt{1st Chemo type} specifies the regimen associated with $\Delta_{C}$, while treatment status at the second scan is determined from \texttt{On tx at 2nd scan (c = chemo, rt, n = none)}, allowing classification into chemotherapy (\texttt{c}), radiotherapy (\texttt{rt}), both (\texttt{c+rt}), or no treatment (\texttt{n}).

\clearpage

\section{Theoretical guarantees}
\label{sec:appendix_theoretical_guarantees}
\paragraph{Assumptions.}
$\Omega\subset\mathbb{R}^d$ is a bounded Lipschitz domain with outward normal $\mathbf n$,
Neumann boundary condition $\partial_{\mathbf n}N=0$, constants $D,k,\theta>0$,
$\alpha_{\mathrm{CT}},\alpha_{\mathrm{RT}},\beta_{\mathrm{RT}}\ge 0$,
chemo exposure $C_{\mathrm{CT}}(t)$ bounded and piecewise $C^1$ with finitely many jumps,
and initial data $N_0\in L^\infty(\Omega)$ with $0\le N_0\le \theta$ a.e.
Event times $\mathcal E=\{t_s^{(m)}\}\cup\{t_n\}$ apply jumps
$N^+=(1-R_m)N^-$ with $R_m(x)\in[0,1]$ and
$N^+=\exp(-\alpha_{\mathrm{RT}}d_n-\beta_{\mathrm{RT}}d_n^2)\,N^-$ with $d_n(x)\ge 0$.

\begin{theorem}[Well-posedness and bounds]\label{thm:A1}
Consider
\[
\partial_t N - D\Delta N
= f(x,t,N):=k\,N\!\left(1-\frac{N}{\theta}\right)-\alpha_{\mathrm{CT}} C_{\mathrm{CT}}(t)\,N
\quad \text{in } \Omega\times(0,T)\setminus\mathcal E,
\]
with Neumann boundary, initial data $N_0$, and the event jumps described above.
Then there exists a unique weak solution
$N\in L^2(0,T;H^1(\Omega))\cap C([0,T];L^2(\Omega))$ and for all $t\in[0,T]$,
\[
0\le N(\cdot,t)\le \theta\quad \text{a.e. in }\Omega.
\]
\end{theorem}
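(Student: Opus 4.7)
The plan is to prove existence, uniqueness, and the $[0,\theta]$ bounds by a splitting argument that exploits the finiteness of event times and chemo discontinuities. First I would partition $[0,T]$ by the union of the event set $\mathcal E$ and the jump set of $C_{\mathrm{CT}}(\cdot)$, yielding finitely many subintervals $(t_{i-1},t_i)$ on which the forcing coefficient is $C^1$ and no jumps occur. The argument then proceeds inductively on these subintervals: on each one I establish well-posedness of the semilinear parabolic Neumann problem with the current initial datum, derive the pointwise bounds $0\le N\le\theta$, and then apply the prescribed jump map at $t_i$ to obtain the initial datum for the next subinterval.

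For well-posedness on a single subinterval, the logistic nonlinearity $kN(1-N/\theta)$ is only locally Lipschitz in $N$, so I would first truncate: replace $N$ in $f$ by $T_\theta(N):=\min(\max(N,0),\theta)$ to obtain a globally Lipschitz nonlinearity $\tilde f$. Standard Galerkin approximation in the eigenbasis of the Neumann Laplacian (or a Banach fixed-point argument in $C([t_{i-1},t_i];L^2(\Omega))$) then produces a unique weak solution $N\in L^2(t_{i-1},t_i;H^1(\Omega))\cap C([t_{i-1},t_i];L^2(\Omega))$ for the truncated problem, with uniqueness following from testing the difference of two solutions against itself and applying Gronwall's inequality with the global Lipschitz constant of $\tilde f$.

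Next I would prove the invariance $0\le N\le\theta$ for the truncated solution via Stampacchia's method. For the lower bound, test the equation with $-N_-:=-\min(N,0)\ge 0$; the diffusion term contributes a nonnegative quantity, and since $\tilde f(x,t,N)=\tilde f(x,t,0)=0$ on $\{N\le 0\}$ by the truncation, one obtains $\tfrac{d}{dt}\tfrac12\|N_-\|_{L^2}^2\le 0$, hence $N_-\equiv 0$. For the upper bound, test with $(N-\theta)_+$; since $\tilde f(x,t,\theta)=-\alpha_{\mathrm{CT}}C_{\mathrm{CT}}(t)\theta\le 0$ and $\tilde f$ is frozen at that value on $\{N>\theta\}$, Gronwall again yields $(N-\theta)_+\equiv 0$. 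Once $0\le N\le\theta$ a.e., the truncated $\tilde f$ coincides with $f$, so the truncated solution is in fact a weak solution of the original equation, and the bounds persist.

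The remaining step is to concatenate across event times. Each surgery map $N^-\mapsto(1-R_m)N^-$ and each radiotherapy map $N^-\mapsto e^{-\alpha_{\mathrm{RT}}d_n-\beta_{\mathrm{RT}}d_n^2}N^-$ is pointwise multiplication by a measurable factor in $[0,1]$, so it maps $L^2$-data valued in $[0,\theta]$ into itself and is admissible as initial data for the next subinterval; uniqueness on $[0,T]$ follows from uniqueness on each piece. I expect the main obstacle to be the interplay between the only-locally-Lipschitz nonlinearity and the a priori bounds, since a direct fixed-point argument on $f$ itself would require an $L^\infty$ estimate that is available only after the maximum-principle step — the truncate-then-remove strategy is what breaks this circularity. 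A lesser subtlety is interpreting $N\in C([0,T];L^2(\Omega))$ in the presence of jumps: this should be read as continuity on each closed subinterval $[t_{i-1},t_i]$ with well-defined one-sided limits at event points (a càdlàg trajectory in $L^2$), which is the natural sense consistent with the jump conditions.
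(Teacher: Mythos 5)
Your proposal is correct and follows essentially the same route as the paper: piecewise-in-time solvability between events, invariance of $[0,\theta]$ via testing with the negative part and with $(N-\theta)_+$, and gluing through the jump maps, which are contractions of $[0,\theta]$ into itself. The one refinement you add --- truncating the logistic nonlinearity to make it globally Lipschitz before the fixed-point/Galerkin step, then removing the truncation once the $L^\infty$ bounds are established --- is a more careful treatment of the local-Lipschitz circularity that the paper resolves only by citing classical semigroup theory, and your c\`adl\`ag reading of $C([0,T];L^2(\Omega))$ across event times is the correct interpretation left implicit in the paper.
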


\begin{proof}
\emph{Piecewise existence/uniqueness.}
Between consecutive events the problem is a semilinear parabolic PDE with a right-hand side
$f(\cdot,t,\cdot)$ that is locally Lipschitz in $N$ and has at most quadratic growth.
Classical monotone operator/semigroup theory yields a unique weak solution on each open interval [see, e.g., \citep{evans2022partial}].

\emph{Nonnegativity.}
Let $N^-:=\max\{-N,0\}$. Testing the weak form with $N^-$ and using the Neumann boundary condition,
\[
\tfrac12\frac{d}{dt}\|N^-\|_{L^2}^2 + D\int_\Omega |\nabla N^-|^2\,dx
= \int_\Omega f(x,t,N)\,N^-\,dx
\le C \|N^-\|_{L^2}^2,
\]
with $C:=k+\alpha_{\mathrm{CT}}\|C_{\mathrm{CT}}\|_{L^\infty(0,T)}$.
Gronwall and $N^-(\cdot,0)=0$ give $N^-\equiv 0$.

\emph{Upper bound by $\theta$.}
Let $M:=N-\theta$ and $M^+:=\max\{M,0\}$. For $N\ge \theta$, we have
$f(x,t,N)\le kN(1-N/\theta)\le 0$. Testing with $M^+$ gives
\[
\tfrac12\frac{d}{dt}\|M^+\|_{L^2}^2 + D\int_\Omega |\nabla M^+|^2\,dx \le 0.
\]
Since $M^+(\cdot,0)=0$, we conclude $M^+\equiv 0$, i.e., $N\le \theta$.

\emph{Events and gluing.}
At surgery: $(1-R_m)\in[0,1]$ maps $[0,\theta]\to[0,\theta]$.
At RT: $s(x)=\exp(-\alpha_{\mathrm{RT}} d-\beta_{\mathrm{RT}}d^2)\in(0,1]$ also maps $[0,\theta]\to[0,\theta]$.
Solving successively between event times and using uniqueness on each subinterval yields global uniqueness and the bound.
\end{proof}

\begin{lemma}[Positivity of the implicit diffusion step]\label{lem:A2}
Let $L_h$ be the standard finite-difference Neumann Laplacian on a uniform grid such that $-L_h$
is symmetric positive semidefinite with nonpositive off-diagonals.
For $\Delta t>0$ and $D>0$, define $A:=I-\Delta t\,D\,L_h$.
Then $A$ is an $M$-matrix and $A^{-1}\ge 0$ entrywise.
Consequently, the implicit diffusion substep $(I-\Delta t\,D\,L_h)\,\tilde N = N^n$
preserves nonnegativity: if $N^n\ge 0$ componentwise, then $\tilde N\ge 0$.
\end{lemma}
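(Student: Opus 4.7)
The plan is to decompose the lemma into three claims: (i) $A$ has the sign structure of a Z-matrix with strictly positive diagonal entries; (ii) $A$ is nonsingular with $A^{-1}\ge 0$ entrywise, i.e.\ $A$ is an M-matrix; and (iii) the update $\tilde N = A^{-1} N^n$ preserves componentwise nonnegativity. Given the assumptions on $L_h$, claim (i) follows by direct inspection and (iii) is an immediate consequence of (ii), so the only substantive step is the entrywise inverse bound, which I would deduce from classical Stieltjes/M-matrix theory.

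First I would translate the assumed sign structure on $-L_h$ into one on $A$. Since $-L_h$ has nonpositive off-diagonals, the off-diagonals of $L_h$ are nonnegative, and hence $A_{ij} = -\Delta t\, D\,(L_h)_{ij}\le 0$ for $i\ne j$. Because $-L_h$ is positive semidefinite, its diagonal entries are nonnegative, so those of $L_h$ are nonpositive, giving $A_{ii} = 1 - \Delta t\, D\,(L_h)_{ii}\ge 1$. Moreover, rewriting $A = I + \Delta t\, D\,(-L_h)$ as a sum of a PSD matrix and $I$ shows that $A$ is symmetric positive definite with spectrum contained in $[1,\infty)$, so in particular $A$ is invertible.

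Next, I would invoke the standard fact that a symmetric positive definite Z-matrix is a Stieltjes matrix, and therefore an M-matrix, which implies $A^{-1}\ge 0$ entrywise (see, e.g., the equivalent characterizations of M-matrices in Berman--Plemmons). An entirely elementary alternative, which avoids citing that equivalence, is a discrete minimum principle: using the Neumann property $L_h\mathbf{1}=0$, the row sums of $A$ equal $1$, so $A_{ii} + \sum_{j\ne i} A_{ij}=1$ with $A_{ij}\le 0$ for $j\ne i$. If $\tilde N$ attained a strictly negative minimum at some index $i^*$, combining $A_{i^* i^*}\ge 1$, $A_{i^* j}\le 0$, and $\tilde N_{i^*}\le \tilde N_j$ yields $(A\tilde N)_{i^*}\le \tilde N_{i^*}<0$, contradicting $(A\tilde N)_{i^*}=N^n_{i^*}\ge 0$. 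Either route delivers $\tilde N\ge 0$ whenever $N^n\ge 0$, which is exactly the last claim of the lemma.

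The main obstacle I anticipate is not analytic but bookkeeping: one must carefully pin down the sign convention of the finite-difference Neumann Laplacian (so that $-L_h$ is the PSD operator, the row-sum identity $L_h\mathbf{1}=0$ holds from the reflective boundary stencil, and the off-diagonal signs match), so that the M-matrix conclusion is unambiguous. Once these conventions are fixed, the implication from SPD Z-matrix to entrywise nonnegative inverse, and thence to positivity preservation of the implicit diffusion step, is essentially textbook.
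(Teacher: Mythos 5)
Your main argument is essentially the paper's own proof: both establish that $A=I+\Delta t\,D\,(-L_h)$ is a Z-matrix with positive diagonal and positive spectrum, conclude it is an M-matrix with $A^{-1}\ge 0$ by the standard characterization (the paper cites Varga, you cite Berman--Plemmons), and read off positivity preservation. Your self-contained alternative via the discrete minimum principle using $L_h\mathbf{1}=0$ is a valid bonus the paper does not include, but the core route is the same.
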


\begin{proof}
Under the stated stencil, $-L_h$ has non-negative diagonal, non-positive off-diagonals,
and is (weakly) diagonally dominant on the connected grid with reflective enforcement;
its spectrum is contained in $[0,\infty)$.
Thus $A$ has positive diagonal, non-positive off-diagonals, and $\sigma(A)\subset (0,\infty)$,
so $A$ is an $M$-matrix with $A^{-1}\ge 0$ [See \citep{vargamatrix} Theorem.~2.5].
\end{proof}

\begin{theorem}[Stability and convergence of IMEX--ETD]\label{thm:A3}
Consider the Lie--Trotter IMEX step on a uniform grid:
\[
\text{(D)}\quad (I-\Delta t\,D\,L_h)\,\tilde N = N^n,\qquad
\text{(R)}\quad N^{n+1}=\Phi_{\Delta t}(\tilde N),
\]
where $\Phi_{\Delta t}$ is the exact solution of $\dot N = aN-bN^2$ with
$a:=k-\alpha_{\mathrm{CT}}C_{\mathrm{CT}}(t_n)$ frozen on $[t_n,t_{n+1}]$
and $b:=k/\theta$.
Then:
\begin{enumerate}
\item[(i)] \textbf{Stability:} the diffusion substep is $L^2$--contractive; the full IMEX step is $L^\infty$--stable in the sense that $0\le N^{n+1}\le \theta$ componentwise, and hence uniformly $L^2$--bounded.
\item[(ii)] \textbf{Invariance:} if $0\le N^n\le \theta$ componentwise, then $0\le N^{n+1}\le \theta$.
\item[(iii)] \textbf{Convergence:} the scheme is $O(h^2+\Delta t)$ consistent and therefore convergent with first order in time and second order in space on $[0,T]$ (away from event instants; at event instants, the splitting remains first order).
\end{enumerate}
\end{theorem}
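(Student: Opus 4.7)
The plan is to prove (i), (ii), (iii) in the stated order, reducing each part to a short argument that invokes Lemma~\ref{lem:A2} and the spectral structure of the Neumann Laplacian $L_h$. Since the reaction step $\Phi_{\Delta t}$ is the exact flow of a scalar logistic ODE, its analysis is entirely pointwise; the only genuinely infinite-dimensional estimate is for the diffusion substep.

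For (i), I note that $-L_h$ is symmetric positive semidefinite, so $(I-\Delta t\,D\,L_h)^{-1}$ is symmetric with spectrum contained in $(0,1]$; hence the diffusion substep is an $\ell^2$ contraction. The $L^\infty$ stability and the uniform $\ell^2$ bound both follow once (ii) is in hand. For (ii), nonnegativity of $\tilde N$ is exactly Lemma~\ref{lem:A2}; for the upper bound, the reflective Neumann stencil satisfies $L_h\mathbf{1}=0$, so $(I-\Delta t\,D\,L_h)(\theta\mathbf{1})=\theta\mathbf{1}$. Writing $N^n=\theta\mathbf{1}-w$ with $w\ge 0$ componentwise, linearity and the entrywise nonnegativity of the inverse (Lemma~\ref{lem:A2}) give $\tilde N = \theta\mathbf{1} - (I-\Delta t\,D\,L_h)^{-1}w \le \theta\mathbf{1}$. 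For the reaction substep, the scalar Riccati $\dot N=aN-bN^2$ with $b=k/\theta>0$ and $a=k-\alpha_{\mathrm{CT}}C_{\mathrm{CT}}(t_n)\le k$ has equilibria $\{0,a/b\}\subset[0,\theta]$ when $a>0$ (only $\{0\}$ when $a\le 0$); monotonicity of the flow on each side of each equilibrium makes $[0,\theta]$ positively invariant under $\Phi_{\Delta t}$, so the clamping in the closed-form formula is only a floating-point safeguard.

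For (iii), I would assemble the local truncation error from four sources: $O(h^2)$ from the centered stencil with reflective enforcement (standard for $C^4$ solutions), $O(\Delta t)$ from implicit Euler on the linear diffusion, $O(\Delta t)$ from the Lie--Trotter split between reaction and diffusion, and $O(\Delta t)$ from freezing $a$ at $t_n$ over one step (controlled by $\|\dot a\|_\infty$ on smooth segments of $C_{\mathrm{CT}}$). The uniform $L^\infty$ bound from (ii) makes the quadratic term $bN^2$ globally Lipschitz with a bound independent of $\Delta t$ and $h$, so the error recursion is a linear discrete Gronwall inequality with bounded coefficients and yields $O(h^2+\Delta t)$ global error on any event-free subinterval. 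At an event instant, the jumps $N^+=(1-R_m)N^-$ and $N^+=\exp(-\alpha_{\mathrm{RT}}d_n-\beta_{\mathrm{RT}}d_n^2)N^-$ are applied identically to the continuous solution and to the scheme, so they contribute no additional error; gluing the subinterval estimates preserves first-order accuracy on $[0,T]$.

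The hardest step will be (iii), specifically the Lie--Trotter commutator estimate across a reaction nonlinearity that is only piecewise smooth in time. At first order this reduces to bounding $\partial_t^2 N$ on each event-free window, which follows from the $L^\infty$ invariance in (ii) together with standard parabolic regularity, but requires partitioning $[0,T]$ at the finitely many jumps of $C_{\mathrm{CT}}$ and summing the local estimates. No order is lost because $a$ is already frozen per time step, but the constant hidden in the $O(\Delta t)$ term depends on the number and magnitude of these jumps and on $D\|\nabla^2 N\|_\infty$ through the splitting commutator.
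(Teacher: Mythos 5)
Your proposal is correct and follows the same overall architecture as the paper's proof (diffusion contraction, componentwise invariance of the reaction flow, and a source-by-source truncation-error count for Lie--Trotter plus implicit Euler plus parameter freezing). Two points differ in a way worth noting. First, for the $\ell^2$ contraction you argue spectrally from $\sigma\bigl((I-\Delta t\,D\,L_h)^{-1}\bigr)\subset(0,1]$, whereas the paper tests the update against $\tilde N$ and reads the bound off the discrete Dirichlet form; these are equivalent, and yours is slightly more economical. Second, and more substantively, you prove that the implicit diffusion substep preserves the upper bound $\theta$ by observing $L_h\mathbf{1}=0$, so $\theta\mathbf{1}$ is a fixed point of $(I-\Delta t\,D\,L_h)$, and then combining $N^n=\theta\mathbf{1}-w$ with the entrywise nonnegativity of the M-matrix inverse from Lemma~\ref{lem:A2}. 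The paper's proof only establishes that the diffusion substep preserves nonnegativity and then asserts the invariance of $[0,\theta]$ for the composed step as if $\tilde N\le\theta$ were already known; your argument closes that gap, and it is needed for both (i) and (ii) as stated. Your reaction-step analysis via the equilibria $\{0,a/b\}$ and monotonicity is the same content as the paper's appeal to Nagumo's criterion ($f(0)=0$, $f(\theta)\le 0$). On (iii), your accounting matches the paper's (implicit Euler $O(\Delta t)$, centered stencil $O(h^2)$, Lie--Trotter local $O(\Delta t^2)$, frozen $a$ local $O(\Delta t^2)$, exact multiplicative jumps contributing no error), and you are more explicit about the regularity ($C^4$ in space, bounded $\partial_t^2 N$ on event-free windows) that the hidden constants require; just be careful with phrasing, since you label the per-step splitting and freezing contributions as $O(\Delta t)$ while calling them local truncation errors --- they are $O(\Delta t^2)$ locally and $O(\Delta t)$ only after accumulation over $O(1/\Delta t)$ steps via the discrete Gronwall argument you invoke.
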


\begin{proof}
\emph{Stability of (D).}
Taking the discrete $L^2$ inner product with $\tilde N$ gives
\[
\|\tilde N\|_2^2 + \Delta t\,D\,\langle -L_h\tilde N,\tilde N\rangle
= \langle N^n,\tilde N\rangle \le \|N^n\|_2\,\|\tilde N\|_2,
\]
which implies $\|\tilde N\|_2\le \|N^n\|_2$ and monotone decay of the discrete Dirichlet form.

\emph{Positivity/invariance.}
By Lemma~\ref{lem:A2}, the diffusion substep maps nonnegative data to nonnegative data.
For the reaction ODE $\dot N=aN-bN^2$ on the interval $[0,\theta]$ we have $f(0)=0$ and
$f(\theta)=k\theta(1-\theta/\theta)-\alpha_{\mathrm{CT}}C_{\mathrm{CT}}(t)\,\theta\le 0$.
Hence by Nagumo’s invariance criterion, $[0,\theta]$ is forward invariant.
Equivalently, the exact reaction flow $\Phi_{\Delta t}$ maps $[0,\theta]$ into itself.
Therefore if $0\le \tilde N\le \theta$ componentwise, then $0\le N^{n+1}\le \theta$.

\emph{Convergence.}
Backward Euler for diffusion is first order in time; central differences for $L_h$
are second order in space on uniform grids [ see \citep{thomee2007galerkin}].
The Lie--Trotter splitting introduces a local $O(\Delta t^2)$ error, leading to global $O(\Delta t)$
accuracy in time [see \citep{hundsdorfer2013numerical}].
Freezing $C_{\mathrm{CT}}$ on each step introduces only $O(\Delta t^2)$ local error when
$C_{\mathrm{CT}}$ is piecewise $C^1$.
Thus the global discretization error is $O(h^2+\Delta t)$ away from events.
Events are applied as exact multiplicative jumps at grid times and therefore do not
degrade the first-order temporal accuracy of the overall scheme.
\end{proof}

\begin{lemma}[Nudging preserves bounds]\label{lem:nudge}
Let $u\in[0,\theta]^{|\Omega|}$ and $u_{\mathrm{obs}}\in[0,\theta]^{|\Omega|}$ be the model
state and observed mask at a nudging time, and fix $\alpha\in[0,1]$. Then
$u^{\mathrm{new}}:=\alpha u+(1-\alpha)u_{\mathrm{obs}}$ satisfies $0\le u^{\mathrm{new}}\le\theta$ componentwise.
\end{lemma}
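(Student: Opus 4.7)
The plan is to reduce the statement to a componentwise inequality and then observe that $u^{\mathrm{new}}_i$ is a convex combination of two scalars in $[0,\theta]$. First I would fix an arbitrary index $i\in\{1,\dots,|\Omega|\}$ and write $u^{\mathrm{new}}_i=\alpha u_i+(1-\alpha)u_{\mathrm{obs},i}$, noting from the hypothesis that $u_i,u_{\mathrm{obs},i}\in[0,\theta]$ and from $\alpha\in[0,1]$ that both weights $\alpha$ and $1-\alpha$ are nonnegative and sum to one.

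Next I would establish the lower bound by multiplying $u_i\ge 0$ by $\alpha\ge 0$ and $u_{\mathrm{obs},i}\ge 0$ by $1-\alpha\ge 0$ and summing, yielding $u^{\mathrm{new}}_i\ge 0$. For the upper bound I would apply the same nonnegativity of weights to $u_i\le\theta$ and $u_{\mathrm{obs},i}\le\theta$, obtaining $u^{\mathrm{new}}_i\le\alpha\theta+(1-\alpha)\theta=\theta$. Since $i$ was arbitrary, the componentwise bounds hold, completing the proof. There is no real obstacle here: the argument is just the elementary fact that $[0,\theta]$ is convex, so it contains every convex combination of its elements; the only point worth making explicit is that the weights $\alpha$ and $1-\alpha$ are simultaneously nonnegative precisely because $\alpha\in[0,1]$, which is why the conclusion would fail for general real $\alpha$.
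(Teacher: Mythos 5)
Your proof is correct and takes exactly the same route as the paper, which simply notes that each component of $u^{\mathrm{new}}$ is a convex combination of two values in $[0,\theta]$. You merely spell out the convexity argument explicitly (nonnegative weights summing to one applied to the lower and upper bounds), so there is nothing further to add.
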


\begin{proof}
Convexity: for each voxel, $u^{\mathrm{new}}$ is a convex combination of two values in $[0,\theta]$.
\end{proof}

\begin{lemma}[Event maps: invariance and Lipschitz]\label{lem:jumps}
Let surgery $J_s(N)=(1-R)\odot N$ with $R\in[0,1]^{|\Omega|}$ and RT $J_r(N)=S(d)\odot N$ with
$S(d)=\exp(-\alpha_{\mathrm{RT}}d-\beta_{\mathrm{RT}}d^2)\in(0,1]$. Then for any $N,M\in[0,\theta]^{|\Omega|}$,
\[
0\le J_{s/r}(N)\le\theta,\qquad
\|J_{s/r}(N)-J_{s/r}(M)\|_2 \le L_{s/r}\,\|N-M\|_2,
\]
with $L_s=\|1-R\|_{\infty}\le 1$ and $L_r=\|S(d)\|_{\infty}\le 1$.
\end{lemma}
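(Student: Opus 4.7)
Both assertions reduce to componentwise inequalities because the event maps $J_s$ and $J_r$ act as diagonal (Hadamard) multipliers. My plan is to establish invariance first, then derive the Lipschitz bound from the same multiplier structure, and finally record the explicit values of the Lipschitz constants.

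For invariance I would fix a voxel index $i$ and note that the multipliers lie in $[0,1]$: for surgery, $R_i\in[0,1]$ gives $(1-R_i)\in[0,1]$, and for RT, the exponent $-\alpha_{\mathrm{RT}}d_i-\beta_{\mathrm{RT}}d_i^2\le 0$ (since $\alpha_{\mathrm{RT}},\beta_{\mathrm{RT}},d_i\ge 0$) yields $S(d)_i\in(0,1]$. Multiplying $N_i\in[0,\theta]$ by a scalar in $[0,1]$ keeps the result in $[0,\theta]$, which proves the invariance $0\le J_{s/r}(N)\le \theta$ componentwise. No cross-voxel interaction appears, so this step is immediate.

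For the Lipschitz bound I would write $J_{s}(N)-J_{s}(M) = (1-R)\odot(N-M)$ and similarly for $J_r$, and then compute in the discrete $\ell^2$ norm:
\begin{equation*}
\|J_{s}(N)-J_{s}(M)\|_2^2 \;=\; \sum_i (1-R_i)^2\,(N_i-M_i)^2 \;\le\; \|1-R\|_\infty^2\,\|N-M\|_2^2,
\end{equation*}
and analogously $\|J_{r}(N)-J_{r}(M)\|_2\le \|S(d)\|_\infty\,\|N-M\|_2$. Taking square roots gives the stated Lipschitz constants $L_s=\|1-R\|_\infty$ and $L_r=\|S(d)\|_\infty$. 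The bounds $L_s\le 1$ and $L_r\le 1$ then follow directly from $R_i\in[0,1]$ and $S(d)_i\in(0,1]$.

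There is no real obstacle here: the maps are diagonal with entries in $[0,1]$, so both invariance and the sup-norm weighted Lipschitz estimate are standard. The only subtlety worth flagging is that positivity of $S(d)$ is strict (so $L_r>0$ unless the dose is infinite), which is why the RT map is injective on the nonnegative cone, whereas $J_s$ can annihilate voxels with $R_i=1$ (complete local resection). This asymmetry does not affect the stated bounds but is useful context when these Lipschitz constants later feed into a global stability estimate over a full treatment schedule.
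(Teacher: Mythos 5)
Your proof is correct and follows essentially the same route as the paper's one-line argument: entrywise bounds from the multipliers lying in $[0,1]$, and the Lipschitz constant as the $\ell^2$ operator norm (i.e., the sup norm of the entries) of a diagonal map. Your expansion merely spells out the componentwise computation and adds a harmless side remark on the strict positivity of $S(d)$.
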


\begin{proof}
Entrywise bounds follow since $(1-R),S(d)\in[0,1]$. Lipschitzness is the operator norm of a diagonal map.
\end{proof}

\section{Limitations}
While the proposed SoC-DT framework shows promise, several limitations remain. First, our evaluation was performed on three synthetic toy datasets and a single real clinical dataset; broader validation across multi-institutional and heterogeneous cohorts is needed to establish clinical applicability. Second, the current implementation operates on 2D slices, which may underrepresent the full spatio-temporal complexity of tumor dynamics. Extending the framework to 3D volumetric data is an important direction for future work. Finally, additional experimentation is required to assess robustness under diverse treatment schedules and imaging protocols before translation to clinical decision support.


\end{document}